\newtheorem{lemma}{Lemma}
\newtheorem{example}{Example}
\begin{document}
%
% paper title
% Titles are generally capitalized except for words such as a, an, and, as,
% at, but, by, for, in, nor, of, on, or, the, to and up, which are usually
% not capitalized unless they are the first or last word of the title.
% Linebreaks \\ can be used within to get better formatting as desired.
% Do not put math or special symbols in the title.
\title{Diversity Enhancement via Magnitude}
%
%
% author names and IEEE memberships
% note positions of commas and nonbreaking spaces ( ~ ) LaTeX will not break
% a structure at a ~ so this keeps an author's name from being broken across
% two lines.
% use \thanks{} to gain access to the first footnote area
% a separate \thanks must be used for each paragraph as LaTeX2e's \thanks
% was not built to handle multiple paragraphs
%

\author{Steve~Huntsman% <-this % stops a space
\thanks{S. Huntsman is with Systems and Technology Research, Arlington,
VA, 22203 USA e-mail: steve.huntsman@str.us.}% <-this % stops a space
\thanks{Manuscript received January 1, 1970; revised January 19, 2038.}}

% note the % following the last \IEEEmembership and also \thanks - 
% these prevent an unwanted space from occurring between the last author name
% and the end of the author line. i.e., if you had this:
% 
% \author{....lastname \thanks{...} \thanks{...} }
%                     ^------------^------------^----Do not want these spaces!
%
% a space would be appended to the last name and could cause every name on that
% line to be shifted left slightly. This is one of those "LaTeX things". For
% instance, "\textbf{A} \textbf{B}" will typeset as "A B" not "AB". To get
% "AB" then you have to do: "\textbf{A}\textbf{B}"
% \thanks is no different in this regard, so shield the last } of each \thanks
% that ends a line with a % and do not let a space in before the next \thanks.
% Spaces after \IEEEmembership other than the last one are OK (and needed) as
% you are supposed to have spaces between the names. For what it is worth,
% this is a minor point as most people would not even notice if the said evil
% space somehow managed to creep in.

% The paper headers
\markboth{Journal of \LaTeX\ Class Files,~Vol.~14, No.~8, January~2022}%
{Huntsman: Diversity Enhancement via Magnitude}
% The only time the second header will appear is for the odd numbered pages
% after the title page when using the twoside option.
% 
% *** Note that you probably will NOT want to include the author's ***
% *** name in the headers of peer review papers.                   ***
% You can use \ifCLASSOPTIONpeerreview for conditional compilation here if
% you desire.

% If you want to put a publisher's ID mark on the page you can do it like
% this:
%\IEEEpubid{0000--0000/00\$00.00~\copyright~2015 IEEE}
% Remember, if you use this you must call \IEEEpubidadjcol in the second
% column for its text to clear the IEEEpubid mark.

% use for special paper notices
%\IEEEspecialpapernotice{(Invited Paper)}

% make the title area
\maketitle

% As a general rule, do not put math, special symbols or citations
% in the abstract or keywords.
\begin{abstract}
Promoting and maintaining diversity of candidate solutions is a key requirement of evolutionary algorithms in general and multi-objective evolutionary algorithms in particular. In this paper, we use the recently developed theory of \emph{magnitude} to construct a gradient flow and similar notions that systematically manipulate finite subsets of Euclidean space to enhance their diversity, and apply the ideas in service of multi-objective evolutionary algorithms. We demonstrate diversity enhancement on benchmark problems using leading algorithms, and discuss extensions of the framework.
\end{abstract}

% Note that keywords are not normally used for peerreview papers.
\begin{IEEEkeywords}
magnitude, diversity, multiobjective evolutionary algorithm
\end{IEEEkeywords}

% For peer review papers, you can put extra information on the cover
% page as needed:
% \ifCLASSOPTIONpeerreview
% \begin{center} \bfseries EDICS Category: 3-BBND \end{center}
% \fi
%
% For peerreview papers, this IEEEtran command inserts a page break and
% creates the second title. It will be ignored for other modes.
\IEEEpeerreviewmaketitle

\section{\label{sec:Introduction}Introduction}

Promoting and maintaining diversity of candidate solutions is a key requirement of \emph{evolutionary algorithms} (EAs) in general and \emph{multi-objective EAs} (MOEAs) in particular \cite{eiben2015introduction,basto2017survey}. Many ways of measuring diversity have been considered, and many shortcomings identified \cite{yan2007diversity}. Perhaps the most theoretically attractive diversity measure, used by \cite{ulrich2010defining,ulrich2011maximizing}, is the \emph{Solow-Polasky diversity} \cite{solow1994measuring}. It turns out that a recently systematized theory of diversity in generalized metric spaces \cite{leinster2021entropy} singles out the Solow-Polasky diversity or \emph{magnitude} of a (certain frequently total subset of a) finite metric space as equal to the maximum value of the ``correct'' definition \eqref{eq:diversity} of diversity that uniquely satisfies various natural desiderata.
\footnote{
In fact, \cite{solow1994measuring} is the initial appearance of the magnitude concept.
}
While the notion of magnitude was introduced in the mathematical ecology literature over 25 years ago, an underlying notion of a diversity-maximizing probability distribution is much more recent and to our knowledge has not yet been applied to EAs.

In the context of MOEAs, a practical shortcoming associated with magnitude is the typical $O(n^3)$ cost of matrix inversion. To avoid this, \cite{ulrich2010defining,ulrich2011maximizing} resort to a heuristic approximation for the sake of efficiency and merely \emph{measure} diversity rather than attempting to \emph{enhance} it from first principles.

However, it is profitable to incur the marginal cost of computing a so-called weighting \emph{en route} to the magnitude, since we can use a weighting to enhance diversity near the boundary of the image of the candidate solution set under the objective functions. The nondominated part of this image is the current approximation to the Pareto front, and the ability of weightings to couple both diversity and convergence to the Pareto front dovetails with recent indicator-based EA approaches to Pareto-dominance based MOEAs \cite{zitzler2004indicator,wang2019diversity}. 
\footnote{
NB. Common performance indicators induce high density on the boundary of and singularities of the Pareto front \cite{wang2019diversity}. We expect weightings to avoid at least the first of these concentration phenomena. 
}
Moreover, the agnosticism of weightings to dimension further enhances their suitability for such applications.

In this paper, we construct a gradient flow and similar mechanisms that systematically manipulate finite subsets of Euclidean space to enhance their diversity, which provides a useful primitive for quality diversity \cite{pugh2016quality}. We then apply these ideas in service of multi-objective evolutionary algorithms by diversifying solution data through local mutations. For the sake of illustration, we only perform these mutations on the results of a MOEA, though they generally can and should be performed during the course of evolution.

The paper is organized as follows. In \S \ref{sec:Preliminaries}, we introduce the basic concepts of weightings, magnitude, and diversity. In \S \ref{sec:Scales}, we identify an efficiently computable ``positive cutoff'' scale above which a weighting is guaranteed to be proportional to the unique diversity-maximizing distribution. In \S \ref{sec:GradientFlow}, we deveop a notion of a weighting gradient (estimate) and an associated gradient flow. In \S \ref{sec:EnhancingDiversity}, we use this gradient flow to demonstrate diversity enhancement on a toy problem before turning to benchmark problems in \S \ref{sec:Benchmark}. Finally, we discuss algorithmic extensions and discrete analogues in \S \ref{sec:Extensions} and \S \ref{sec:DiscreteObjective}, respectively, before making brief remarks in \S \ref{sec:Remarks}. Appendix \S \ref{sec:erosion} discusses an ``erosion'' procedure that is relevant to culling a population.

\section{\label{sec:Preliminaries}Weightings, magnitude, and diversity}

For details on the ideas in this section, see \S 6 of \cite{leinster2021entropy}.

Call a square nonnegative matrix $Z$ a \emph{similarity matrix} if its diagonal is strictly positive. An important class of similarity matrices is of the form $Z = \exp[-td]$ where the exponential is componentwise,
\footnote{
For a matrix $M$ and well behaved function $f$, we follow a standard practice in writing $(f[M])_{jk} := f(M_{jk})$ to distinguish a function applied to matrix entries versus to the matrix itself in the sense of functional calculus. Thus, e.g., $\exp[M] \ne \exp(M) = I + M + M^2/2! + \dots$. 
}
$t \in (0,\infty)$, and $d$ is a square matrix whose entries are in $[0,\infty]$ and satisfy the triangle inequality. In this paper, $d$ will always be the matrix of distances for a finite subset of Euclidean space unless otherwise specified.

A \emph{weighting} $w$ is a column vector satisfying $Zw = 1$, where the vector of all ones is indicated on the right. 
\footnote{
In Euclidean space, $Z = \exp[-td]$ is a radial basis function interpolation matrix \cite{buhmann2003radial} and the weighting equation $Zw = 1$ amounts to the statement that the weighting $w$ provides the coefficients for interpolating the unit function. That is, if $\{x_j\}_{j=1}^n$ are points in Euclidean space with distance matrix $d$ and we have a weighting $w$ satisfying $\sum_k w_k \exp(-td_{jk}) = 1$, then in fact $u(x) := \sum_k w_k \exp(-t|x-x_k|) \approx 1$, where $\approx$ indicates an optimal interpolation in the sense of a representer theorem \cite{scholkopf2001generalized}. In particular, the triangle inequality yields $u(x_j + \delta x_j) \ge \exp(-t|\delta x_j|)$. 
}
A \emph{coweighting} is the transpose of a weighting for $Z^T$. If $Z$ admits both a weighting $w$ and a coweighting, then its \emph{magnitude} is defined via $\text{Mag}(Z) := \sum_j w_j$, which also turns out to equal the sum of the coweighting components.

In the case $Z = \exp[-td]$ and $d$ is the distance matrix corresponding to a finite subset of Euclidean space, $Z$ is positive definite, hence invertible, and so its weighting and magnitude are well-defined and unique. 
\footnote{
\label{foot:positiveDefinite}
In Euclidean space, $Z = \exp[-td]$ is just a Laplacian kernel, which is positive definite \cite{steinwart2008support}. 
}
More generally, if $Z$ is invertible then $\text{Mag}(Z) = \sum_{jk} (Z^{-1})_{jk}$. For a generalized metric space $d$ of the form specified above, the \emph{magnitude function} $\text{Mag}(t;d)$ is defined as the map $t \mapsto \text{Mag}(\exp[-td])$.  

It turns out \cite{willerton2009heuristic,bunch2020practical} that weightings are excellent scale-dependent boundary detectors in Euclidean space (see, e.g., Figure \ref{fig:boundaryABC20210402}). In particular, if $A$ is compact and $B \subset A$ is reasonably nice and finite (e.g., a uniform random sample or intersection with a regular lattice), then the largest components of a weighting on $B$ tend to occur near the boundary of $A$. Meanwhile, any negative weighting components (if they exist) tend to be ``just behind the boundary'' of $A$ (see \S \ref{sec:erosion}). A technical explanation for this boundary-detecting behavior draws on the potential-theoretical notion of Bessel capacities \cite{meckes2015magnitude}. 
\footnote{
\label{foot:Bessel}
For the case of an interval in $\mathbb{R}$ this has been examined in detail: see the discussion following Proposition 5.9 of \cite{meckes2015magnitude}. More generally, the weighting of compact $A \subset \mathbb{R}^n$ (defined for infinite $A$ via a suitable technical procedure) is the distribution $\frac{1}{n! \omega_n}(I-\Delta)^{(n+1)/2}h$, where $\omega_n$ is the volume of the unit ball in $\mathbb{R}^n$ and $h$ is the \emph{Bessel potential function} of $A$. The Bessel potential function is defined in turn as the function that takes the value $1$ on $A$ and minimizes the \emph{Bessel potential space} norm, which generalizes the notion of a Sobolev norm. Another less precise but more computationally expedient manifestation of the same idea arises from an identity of Varadhan \cite{varadhan1967behavior} that is equivalent to $(I-[\frac{n+1}{t}]^2 \Delta)^{-(n+1)/2} = \exp (-td + o(t^{-1}))$ for manifolds. Using a fast algorithm for the Laplacian can be computationally favorable, as discussed in Remark 4.19 of \cite{peyre2019computational}.
}
Our own investigations of asymmetric and thus manifestly non-Euclidean distances suggest that (co)weightings also indicate the presence of boundary-like features in those settings. 

Meanwhile, magnitude is a very attractive and general notion of size that encompasses both cardinality and Euler characteristic, as well as encoding other rich scale-dependent geometrical data \cite{leinster2017magnitude}. 

\begin{example}
\label{ex:3PointSpace}
Consider $\{x_j\}_{j=1}^3 \subset \mathbb{R}^2$ with pairwise distances $d_{jk} := d(x_j,x_k)$ given by $d_{12} = d_{13} = 1 = d_{21} = d_{31}$ and $d_{23} = \delta = d_{32}$ with $\delta<2$. A routine calculation yields that 
\begin{equation}
w_1 = \frac{e^{(\delta+2)t}-2e^{(\delta+1)t}+e^{2t}}{e^{(\delta+2)t}-2e^{\delta t}+e^{2t}} \nonumber
\end{equation}
and
\begin{equation}
w_2 = w_3 = \frac{e^{(\delta+2)t}-e^{(\delta+1)t}}{e^{(\delta+2)t}-2e^{\delta t}+e^{2t}}. \nonumber
\end{equation}
This is shown in Figure \ref{fig:3pointSpace20210402} for $\delta = 10^{-3}$. At $t = 10^{-2}$, the ``effective size'' of the nearby points is $\approx 0.25$, and that of the distal point is $\approx 0.5$, so the ``effective number of points'' at this scale is $\approx 1$. At $t = 10$, these effective sizes are respectively $\approx 0.5$ and $\approx 1$, so the effective number of points is $\approx 2$. Finally, at at $t = 10^4$, the effective sizes are all $\approx 1$, so the effective number of points is $\approx 3$. 

    %delta = 1e-3;
    %t = logspace(-2,4,1000);
    %% -2 all around for numerics
    %bottom = exp((delta+2-2)*t)-2*exp((delta-2)*t)+exp((2-2)*t);
    %top1 = exp((delta+2-2)*t)-2*exp((delta+1-2)*t)+exp((2-2)*t);
    %top23 = exp((delta+2-2)*t)-exp((delta+1-2)*t);
    %%
    %figure('Position',[0,0,400,200]);
    %semilogx(t,top1./bottom,'b',t,top23./bottom,'r','LineWidth',1);
    %xlabel('$t$','Interpreter','latex');
    %title({'Weighting components for three-point space with',...
    %    ['$d_{12} = d_{13} = 1 = d_{21} = d_{31}$ and ',...
    %    '$d_{23} = \delta = d_{32}$ for $\delta = ',num2str(delta),'$']},...
    %    'Interpreter','latex');
    %ylim([0,1.25]);
    %grid on;
    %set(gca,'YTick',0:.25:1.25);
    %legend({'$w_1$','$w_2 = w_3$'},'Interpreter','latex','Location','best');
    %fileDir = '/Users/sha0639/OneDrive - Systems & Technology Research/Shared with Everyone/EnhancingDiversityPaper/';
    %fileName = ['3pointSpace',num2str(yyyymmdd(datetime))];
    %print([fileDir,filesep,fileName],'-dpng');
    %print([fileDir,filesep,fileName],'-dpdf');

\begin{figure}[h]
  \centering
  \includegraphics[trim = 40mm 110mm 40mm 105mm, clip, width=\columnwidth,keepaspectratio]{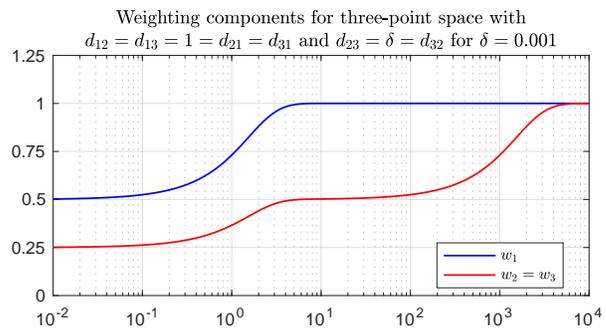}
  \caption{Weighting components for an ``isoceles'' metric space. The magnitude function $w_1+w_2+w_3$ gives a scale-dependent ``effective number of points.''}
  \label{fig:3pointSpace20210402}
\end{figure}
\end{example}

The notion of magnitude has been used by ecologists to \emph{quantify} diversity since the work of Solow and Polasky \cite{solow1994measuring}, but much more recent mathematical developments have clarified the role that magnitude and weightings play in \emph{maximizing} a more general and axiomatically supported notion of diversity \cite{leinster2016maximizing,leinster2021entropy}. Specifically, the \emph{diversity of order $q$} for a probability distribution $p$ and similarity matrix $Z$ is the exponent of 
\begin{equation}
\label{eq:diversity}
\frac{1}{1-q} \log \sum_{j: p_j > 0} p_j (Zp)_j^{q-1}
\end{equation} 
for $1 < q < \infty$, and via limits for $q = 1,\infty$.
\footnote{
The expression \eqref{eq:diversity} is a ``similarity-sensitive'' generalization of the R\'enyi entropy of order $q$. In the event $Z = I$, the usual R\'enyi entropy is recovered, with Shannon entropy as the case $q = 1$.
}
This is the ``correct'' measure of diversity in essentially the same way that Shannon entropy is the ``correct'' measure of information. With this in mind, we restrict our attention to it versus other measures such as those discussed in \cite{yan2007diversity,basto2017survey}.

If $Z$ is symmetric, then the (unique) positive weighting of the submatrix on common row and column indices that has the largest magnitude is proportional to the diversity-saturating distribution for \emph{all} values of the free parameter $q$ in the definition, and this magnitude equals the maximum diversity. In general the diversity-maximizing distribution is $\mathbf{NP}$-hard to compute, though cases of size $\le 25$ are easily handled on a laptop. Fortunately, the situation improves radically if besides being symmetric, $Z$ is also positive definite (as guaranteed for the metric on finite subsets of Euclidean space) and admits a positive weighting. Then this (unique) weighting is proportional to the diversity-maximizing distribution, and standard linear algebra suffices to obtain it efficiently. 

Using an efficiently computable ``diagonal cutoff'' scale factor $t = t_d$ defined in Lemma \ref{lemma:diagonalCutoffBounds}, we can optimally enforce this advantageous situation for similarity matrices of the form $Z = \exp[-td]$. With $t_d$ in hand, we can also efficiently identify a ``positive cutoff,'' i.e., the smallest scale $t_+ \le t_d$ at which the weighting is automatically guaranteed to be proportional to a diversity-maximizing distribution. 

This in turn allows us to define a gradient flow that increases the diversity of finite subsets of Euclidean space. This gradient flow is a  principled and computationally efficient primitive for diversity optimization that informs evolutionary algorithms in Euclidean spaces. More generally, this gradient flow can also be defined on ``positive definite metric spaces'' \cite{meckes2013positive} with vector space structure. Stochastic mechanisms that increase diversity on essentially arbitrary spaces satisfying a triangle inequality can be constructed as in \S \ref{sec:DiscreteObjective}.

\section{\label{sec:Scales}Cutoff and zero scales}

In this section, we introduce two relevant scale factors associated with finite metric spaces, most notably a ``positive cutoff'' $t_+$ that is the minimal scale factor above which a weighting is proportional to a diversity-maximizing distribution. Besides allowing us to efficiently link weightings and diversity, using this scale factor also allows us to eliminate an unwanted degree of freedom in the definition of a weighting. For completeness, we also outline the behavior of scale factors approaching infinity and zero.

For similarity matrices that are ultrametric
\footnote{
The ultrametric triangle inequality is $d(x,z) \le \max \{d(x,y),d(y,z)\}$. Given a nonnegative symmetric matrix with zero diagonal, we can construct the unique maximal subdominant ultrametric (viz., the maximal hop in a minimal-weight path) using a spanning tree construction as in \S IV.C of \cite{rammal1986ultrametricity}. 
}
or diagonally dominant
\footnote{
Recall that a square matrix over $\mathbb{R}$ is diagonally dominant if each diagonal element exceeds the sum of the other entries in its row.
}
and of the form $\exp[-d]$ with $d$ symmetric, nonnegative, and with zero diagonal, there is a polynomial-time algorithm to compute the diversity-maximizing distribution that is also practical and admits ample scope for acceleration. This algorithm is just to normalize the weighting.

For $d$ with zero diagonal and all other entries positive, there exists a minimal $t_d >0$ such that $\exp[-td]$ is diagonally dominant for any $t > t_d$. We call $t_d$ the \emph{diagonal cutoff}. Because $\exp[-td]$ is diagonally dominant iff $1 > \max_j \sum_{k \ne j} \exp(-t d_{jk})$, we can efficiently estimate $t_d$ to any desired precision using the following elementary bounds and a binary search:
\begin{lemma} 
\label{lemma:diagonalCutoffBounds} 
For $d \in M_n$ as above,
\footnote{
A metric $d$ on $\{1,\dots,n\}$ is called \emph{scattered} if $d(x,y) > \log(n-1)$ for all $x \ne y$ (see, e.g., Definition 2.12 of \cite{leinster2013magnitude}). A corollary of Lemma \ref{lemma:diagonalCutoffBounds} that amounts to Proposition 2.4.17 of \cite{leinster2013magnitude} is that the diagonal cutoff of a scattered metric space is strictly less than 1.
}
\begin{equation}
\label{eq:diagonalCutoffBounds}
\frac{\log(n-1)}{\min_j \max_k d_{jk}} \le t_d \le \frac{\log(n-1)}{\min_j \min_{k \ne j} d_{jk}}.
\end{equation}
\end{lemma}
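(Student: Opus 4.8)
The plan is to reduce the two-sided bound to elementary estimates on the off-diagonal row sums of $\exp[-td]$. Since the diagonal of $\exp[-td]$ is identically $1$, the diagonal-dominance criterion quoted just above the lemma, $1 > \max_j \sum_{k \ne j} \exp(-t d_{jk})$, suggests introducing for each row $j$ the function $f_j(t) := \sum_{k \ne j} \exp(-t d_{jk})$. Because every off-diagonal distance $d_{jk}$ is strictly positive, each $f_j$ is strictly decreasing and continuous in $t$, with $f_j(0) = n-1$ and $f_j(t) \to 0$ as $t \to \infty$; for $n \ge 3$ we have $f_j(0) = n-1 > 1$, so $f_j$ has a unique root $t_j > 0$ with $f_j(t_j) = 1$, and $f_j(t) < 1$ iff $t > t_j$. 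The matrix is diagonally dominant exactly when $\max_j f_j(t) < 1$, whence $t_d = \max_j t_j$. This reduces the whole problem to bounding each individual crossing $t_j$ from above and below.

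For the upper bound I would replace each distance by the smallest one in its row: since $d_{jk} \ge \min_{k' \ne j} d_{jk'}$, we obtain $f_j(t) \le (n-1)\exp\bigl(-t \min_{k' \ne j} d_{jk'}\bigr)$. Forcing the right-hand side to be $\le 1$ and solving for $t$ shows $f_j(t) \le 1$ whenever $t \ge \log(n-1)/\min_{k' \ne j} d_{jk'}$, and hence $t_j \le \log(n-1)/\min_{k' \ne j} d_{jk'}$. Taking the maximum over $j$ and using $t_d = \max_j t_j$, the per-row minimum is driven to its global value, giving the claimed upper bound $t_d \le \log(n-1)/\min_j \min_{k \ne j} d_{jk}$.

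For the lower bound I would instead replace each distance by the largest one in its row: since $d_{jk} \le \max_k d_{jk}$ (and the zero diagonal makes $\max_k d_{jk} = \max_{k \ne j} d_{jk}$, so there is no ambiguity), we get $f_j(t) \ge (n-1)\exp\bigl(-t \max_k d_{jk}\bigr)$, and this lower bound is $\ge 1$ precisely when $t \le \log(n-1)/\max_k d_{jk}$. Since $f_j$ is decreasing, $f_j(t) \ge 1$ on that range forces $t_j \ge \log(n-1)/\max_k d_{jk}$. As $t_d = \max_j t_j \ge \max_j \bigl[\log(n-1)/\max_k d_{jk}\bigr]$, the outer maximum over $j$ sends the per-row maximum to its global minimum, yielding the lower bound $t_d \ge \log(n-1)/\min_j \max_k d_{jk}$.

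The argument is almost entirely routine; the only points requiring care are bookkeeping the direction of monotonicity—replacing each distance by the row \emph{minimum} over-estimates $f_j$ and hence caps $t_j$ from above, while replacing by the row \emph{maximum} under-estimates $f_j$ and hence bounds $t_j$ from below—and correctly tracking how the outer $\max_j$ (from $t_d = \max_j t_j$) combines with the inner extrema to produce a global $\min_j$ in each denominator. I expect the only genuine (if minor) obstacle to be the preliminary verification that $t_d = \max_j t_j$ is well-defined, i.e., that each $f_j$ is strictly monotone with the stated endpoint behavior, which is what guarantees a unique crossing of the value $1$.
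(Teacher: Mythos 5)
Your proof is correct and follows essentially the same route as the paper's: both arguments bound the off-diagonal row sum $\sum_{k \ne j} \exp(-t d_{jk})$ above and below by $(n-1)$ times its smallest and largest terms, respectively, and solve the resulting inequalities for $t$. Your additional scaffolding --- the monotone functions $f_j$, their unique crossings $t_j$, and the identity $t_d = \max_j t_j$ --- is a slightly more careful packaging of the same estimates rather than a different argument, and if anything makes explicit a well-definedness point the paper leaves implicit.
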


\begin{proof}
If $\exp[-td]$ is diagonally dominant, then 
\begin{align}
& 1 > (n-1) \cdot \max_j \min_{k \ne j} \exp(-t d_{jk}) \nonumber \\
\iff & 1 > (n-1) \cdot \exp \left ( -t \cdot \min_j \max_{k \ne j} d_{jk} \right ) \nonumber \\
\iff & \log(n-1) < t \cdot \min_j \max_k d_{jk}, \nonumber 
\end{align}
which yields the first inequality in \eqref{eq:diagonalCutoffBounds}. Meanwhile, $\exp[-td]$ is diagonally dominant if
\begin{align}
& 1 > (n-1) \cdot \max_j \max_{k \ne j} \exp(-t d_{jk}) \nonumber \\
\iff & 1 > (n-1) \cdot \exp \left ( -t \cdot \min_j \min_{k \ne j} d_{jk} \right ) \nonumber \\
\iff & \log(n-1) < t \cdot \min_j \min_{k \ne j} d_{jk}, \nonumber 
\end{align}
which yields the second inequality in \eqref{eq:diagonalCutoffBounds}. 
\end{proof}

As foreshadowed, we can also use Lemma \ref{lemma:diagonalCutoffBounds} to find the \emph{positive cutoff}, defined as the smallest $t_+ < t_d$ such that $\exp[-td]$ admits a positive weighting for $t > t_+$. 
\footnote{
\label{foot:positiveCutoff}
Outside the Euclidean setting, it makes sense to define $t_+$ to be the minimal value of $t$ such that $Z = \exp[-td]$ admits a positive weighting \emph{and $Z$ is positive semidefinite} for $t > t_+$. But in the Euclidean setting, we get positive definiteness for free: see footnote \ref{foot:positiveDefinite}. 
}
%\footnote{
%As an example, consider $d$ given by the usual metric on the complete bipartite graph $K_{2,3}$. Then $t_d = -\log \frac{\sqrt{13}-3}{2} \approx 1.1948$, while $t_+ = \log 2 \approx 0.6931$, whether or not we require $Z$ to be positive semidefinite per footnote \ref{foot:positiveCutoff}.
%}

The limit $t \uparrow \infty$ of (co)weightings and the magnitude function is uninteresting: the (co)weightings are (co)vectors of all ones, and the limiting value of the magnitude function is thus just the number of points in the space under consideration. However, as Figure \ref{fig:boundaryABC20210402} suggests, the limit $t \downarrow 0$ contains more detailed structural information. The functional form of this limit is not hard to obtain:

\begin{lemma} 
\label{lemma:scale0} 
For $d \in GL(n,\mathbb{R})$ the solution to $\exp[-td] w(t) = 1$ (where as usual the exponential is elementwise) has the well-defined limit $w(0) := \lim_{t \downarrow 0} w(t) = \frac{d^{-1}1}{1^T d^{-1} 1}$. Similarly, the solution to $v(t) \exp[-td] = 1$ has the limit $v(0) = \frac{1^T d^{-1}}{1^T d^{-1} 1}$.
\footnote{
It would be interesting and probably also useful to characterize the quality of the approximation $w(t) \approx 1 + e^{-t} \frac{(d^{-1}-I)1}{1^T d^{-1} 1}$. More generally, the same could be said of the approximation $I + e^{-t} \frac{(d^{-1}-I)}{1^T d^{-1} 1} \approx Z^{-1}$. Probably a decent answer should inform (or be informed by) results on Bessel potentials and/or the Varadhan identity (see footnote \ref{foot:Bessel}).
}
\end{lemma}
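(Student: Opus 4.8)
The obstruction to simply setting $t = 0$ is that $\exp[-0 \cdot d]$ is the all-ones matrix $J := 1 \cdot 1^T$, which is rank one and hence singular; the limit is therefore a $0/0$-type indeterminacy that must be resolved using the structure of the perturbation rather than by naive substitution. The plan is to factor out the singular part and collapse everything to a pair of scalar relations.

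First I would write $\exp[-td] = J - t A(t)$, where $A(t) := t^{-1}(J - \exp[-td])$ has entries $t^{-1}(1 - e^{-t d_{jk}})$ that extend continuously to $d_{jk}$ at $t = 0$; thus $A(t) \to d$ as $t \downarrow 0$. Since $d$ is invertible and the determinant is continuous, $A(t)$ is invertible on a punctured neighborhood of $0$, and I may left-multiply the weighting equation by $A(t)^{-1}$. Writing $a(t) := A(t)^{-1} 1$ and using $J = 1 \cdot 1^T$, the equation $(J - tA(t))\,w(t) = 1$ becomes $a(t)\,(1^T w(t)) - t\,w(t) = a(t)$, which rearranges to the scalar-multiple identity $w(t) = t^{-1}(s(t) - 1)\,a(t)$ with $s(t) := 1^T w(t)$. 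In particular this already shows that any solution is unique and that $w(t)$ is, up to scale, the vector $A(t)^{-1} 1$.

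The key maneuver is then to pin down the scalar $s(t)$ by applying $1^T$ to this identity. Setting $\alpha(t) := 1^T a(t) = 1^T A(t)^{-1} 1$, I obtain the single linear equation $s(t) = t^{-1}(s(t)-1)\alpha(t)$, whose solution is $s(t) = \alpha(t)/(\alpha(t) - t)$ and hence $t^{-1}(s(t) - 1) = (\alpha(t) - t)^{-1}$. Substituting back gives the closed form $w(t) = a(t)/(\alpha(t) - t) = A(t)^{-1} 1 / (1^T A(t)^{-1} 1 - t)$, valid for all small $t > 0$; letting $t \downarrow 0$ and using $A(t) \to d$ yields $w(0) = d^{-1} 1 / (1^T d^{-1} 1)$. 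The coweighting claim follows by applying the identical argument to $d^T$ (also invertible) and transposing, since $1^T (d^{-1})^T 1 = 1^T d^{-1} 1$.

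The main obstacle, and the only place where care is genuinely needed, is precisely the indeterminacy created by the singularity of $J$: the nondegeneracy condition $1^T d^{-1} 1 \neq 0$ (implicit in the statement) is exactly what guarantees that the denominator $\alpha(t) - t$ stays bounded away from $0$ as $t \downarrow 0$, so that the limit exists and is finite. If instead $1^T d^{-1} 1 = 0$, the same computation exhibits a pole, consistent with the magnitude $s(t) = 1^T w(t)$ blowing up there.
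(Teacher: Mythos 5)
Your argument is correct, and it is essentially a rigorous completion of the route the paper only sketches. The paper's proof replaces $\exp[-td]$ by the first-order approximation $11^T - td$, inverts this rank-one update of $-td$ via the Sherman--Morrison--Woodbury formula, and then invokes L'H\^opital's rule to resolve the limit; the ``$\approx$'' leaves implicit the claim that higher-order corrections do not affect the answer. You instead work with the \emph{exact} decomposition $\exp[-td] = 11^T - tA(t)$, where $A(t) \to d$ entrywise, and your manipulation with $a(t) = A(t)^{-1}1$ and $\alpha(t) = 1^T A(t)^{-1}1$ is precisely the Sherman--Morrison computation carried out by hand on the vector $1$. What this buys is a closed form $w(t) = A(t)^{-1}1/\bigl(1^T A(t)^{-1}1 - t\bigr)$ that is exact for all small $t>0$, so the limit follows by continuity alone, no L'H\^opital step is needed, and the error analysis elided by the paper's sketch is absorbed into the single statement $A(t)\to d$; you also correctly surface the nondegeneracy condition $1^T d^{-1}1 \neq 0$ that the lemma's conclusion tacitly presupposes. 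The only gap worth closing is that your chain of implications establishes uniqueness but not existence: you should add the one-line verification that $w = a(t)/(\alpha(t)-t)$ does satisfy $(11^T - tA(t))w = 1$, namely $1\,(1^T a) - tA a = \alpha 1 - t1 = (\alpha - t)1$. As a sanity check, your intermediate formula $s(t) = \alpha(t)/(\alpha(t)-t) \to 1$ recovers the known fact that the magnitude function tends to $1$ as $t \downarrow 0$.
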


\begin{proof}[Proof (sketch)]
We have the first-order approximation $(11^T - td) w(t) \approx 1$. Applying the Sherman-Morrison-Woodbury formula \cite{horn2012matrix} and L'H\^opital's rule yields the result.
\end{proof}

\section{\label{sec:GradientFlow}The weighting gradient flow}

Although there are various sophisticated approaches to estimating gradients on point clouds (see, e.g., \cite{luo2009approximating}), a reasonable heuristic estimate for the specific case of the gradient of a weighting $w$ on $\{x_j\}_j$ in Euclidean space is
\begin{equation}
\label{eq:gradient}
(\hat \nabla w)_j := \sum_{k \ne j} \frac{Z_{jk}}{\sum_{k' \ne j} Z_{jk'}} \frac{w_k - w_j}{d_{jk}} e_{jk}
\end{equation}
where $e_{jk} := \frac{x_k - x_j}{d_{jk}}$.

\begin{example}
\label{ex:GradientsABC}
Figure \ref{fig:boundaryABC20210402} illustrates how weightings identify boundaries at various scales; Figure \ref{fig:GradientsABC20210402} shows the corresponding weighting gradient estimates \eqref{eq:gradient}.%; and Figure \ref{fig:abcWGF20210402} shows the weighting gradient flow \eqref{eq:weightingGradientFlow} at the (continually updated) positive cutoff $t_+$.

\begin{figure}[h]
  \centering
  \includegraphics[trim = 40mm 110mm 40mm 105mm, clip, width=\columnwidth,keepaspectratio]{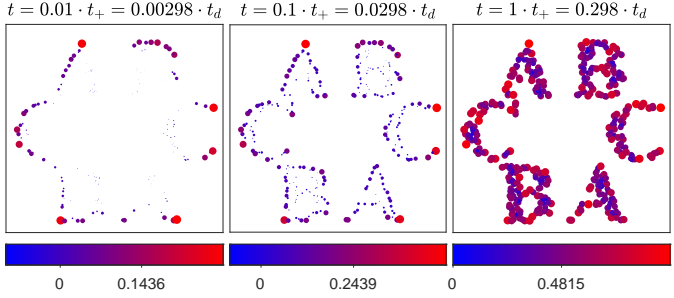}
  \caption{Weighting components for $500$ points sampled without replacement from a probability distribution on $\mathbb{Z}^2$ that is approximately uniform on its support. From left to right, various scale factors $t$ defining a similarity matrix via $Z = \exp[-td]$ are shown in terms of the intrinsic scales $t_d$ and $t_+$ (for which see \S \ref{sec:Scales}). Here $d$ is given by the usual Euclidean distance. Both the color and size of a point are functions of the weighting component; the nonzero color axis tick mark is at half the maximum value.}
  \label{fig:boundaryABC20210402}
\end{figure}

% Cf. untitled20201023.m

    % run code above and then...

    %tp = posCutoff(d_x)*(1+sqrt(eps));
    %td = uvCutoff(d_x)*(1+sqrt(eps));
    %t = tp*logspace(-2,0,3);
    %h = figure('Position',[0,0,400,200]);
    %[ha, ~] = tight_subplot(1,numel(t),.01,.01,.02);
    %for i = 1:numel(t)
    %   Z = exp(-t(i)*d_x);
    %   w = Z\ones(size(Z,1),1);
    %   v = ones(1,size(Z,2))/Z;
    %
    %   n = size(Z,1);
    %   
    %   del_w{i} = zeros(2,n);
    %   for j = 1:n
    %       for k = 1:n
    %           if k ~= j
    %               xj = [x1(j);x2(j)];
    %               xk = [x1(k);x2(k)];
    %               ejk = (xk-xj)/norm(xk-xj);
    %               del_w{i}(:,j) = del_w{i}(:,j)+...
    %                   (Z(j,k)/sum(Z(j,:)))*(w(k)-w(j))*ejk/d_x(j,k);
    %           end
    %       end
    %   end
    %
    %   axes(ha(i));
    %   hold on;
    %   pcolor(img);
    %   shading flat;
    %   colormap(flipud(gray));
    %   caxis([0,1024])
    %   quiver(x2,x1,del_w{i}(2,:),del_w{i}(1,:),...
    %       'Color','k','LineWidth',1,'AutoScaleFactor',4);
    %   daspect([1,1,1]);
    %   box on;
    %   set(gca,'XTick',[],'YTick',[]);
    %   axis([0,ax(4)-ax(3),0,ax(2)-ax(1)]);
    %   title(['$t = ',num2str(t(i)/tp,3),'\cdot t_+ = ',num2str(t(i)/td,3),...
    %       '\cdot t_d$'],'Interpreter','latex');
    %   
    %end
    %fileDir = '/Users/sha0639/OneDrive - Systems & Technology Research/Shared with Everyone/EnhancingDiversityPaper/';
    %fileName = ['GradientsABC',num2str(yyyymmdd(datetime))];
    %print([fileDir,filesep,fileName],'-dpng');
    %print([fileDir,filesep,fileName],'-dpdf');

\begin{figure}[h]
  \centering
  \includegraphics[trim = 40mm 115mm 40mm 110mm, clip, width=\columnwidth,keepaspectratio]{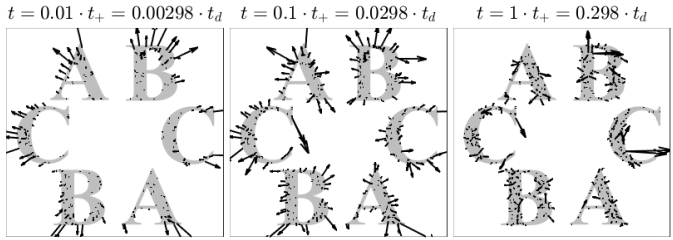}
  \caption{Weighting gradient estimate \eqref{eq:gradient} for the data in Figure \ref{fig:boundaryABC20210402}. The gradient vectors are scaled uniformly in each panel for visualization purposes. Note that for the largest value of $t$ the large gradient vectors have basepoints near other large gradient vectors.}
  \label{fig:GradientsABC20210402}
\end{figure}

\end{example}

It might appear that the \emph{weighting gradient flow} 
\begin{equation}
\label{eq:weightingGradientFlow}
\dot x = \hat \nabla w
\end{equation}
induced by \eqref{eq:gradient} merely drives points apart, but Example \ref{ex:3PointFlow} shows that the phenomenology is more nuanced.
\footnote{
Although the weighting gradient flow \eqref{eq:weightingGradientFlow} is for our purposes a finite system of coupled nonlinear ODEs, it seems likely in light of footnote \ref{foot:Bessel} that this is an analogue of some extrinsic geometric flow in the sense of \cite{andrews2020extrinsic}.
}
\footnote{
Nevertheless, it seems likely that the weighting gradient flow \eqref{eq:weightingGradientFlow} can inform clustering algorithms. We leave this for future work.
}

\begin{example}
\label{ex:3PointFlow} 
Continuing Example \ref{ex:3PointSpace}, assume without loss of generality that $(x_1)_2 = 0 = (x_2)_2+(x_3)_2$. Now
\begin{equation}
(\hat \nabla w)_1 = \frac{\sqrt{4-\delta^2}}{2} \frac{e^{(\delta+1)t}-e^{2t}}{e^{(\delta+2)t}-2e^{\delta t}+e^{2t}} e_1 \nonumber
\end{equation}
and
\begin{equation}
(\hat \nabla w)_2 = \frac{e^{-t}}{e^{-t}+e^{-\delta t}} \frac{e^{2t}-e^{(\delta+1)t}}{e^{(\delta+2)t}-2e^{\delta t}+e^{2t}} e_{21}. \nonumber
\end{equation}

Thus the sign of $\langle (\hat \nabla w)_1, e_1 \rangle$ is the same as that of $\delta-1$, and the sign of $\langle (\hat \nabla w)_2, e_{21} \rangle$ is the opposite. If $\delta = 1$, the flow is stationary. Meanwhile, $\langle (\hat \nabla w)_2, e_1 \rangle = -\frac{\sqrt{4-\delta^2}}{2} \langle (\hat \nabla w)_2, e_{21} \rangle$, so the isoceles triangle both increases its aspect ratio and experiences an overall shift. In other words, the weighting gradient flow wants to flatten \emph{and move} the isoceles triangle in the direction suggested by its initial aspect ratio, as shown in Figures \ref{fig:3PointFlow1_20210222} and \ref{fig:3PointFlow2_20210222}.
\end{example}

\begin{figure}[h]
  \centering
  \includegraphics[trim = 40mm 120mm 40mm 115mm, clip, width=\columnwidth,keepaspectratio]{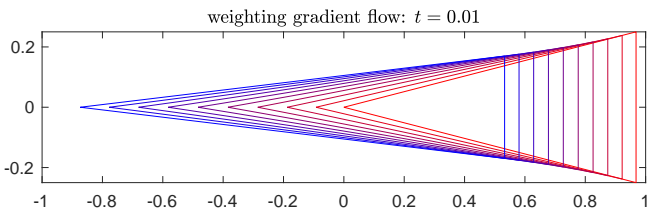}
  \caption{Effect of weighting gradient flow \eqref{eq:weightingGradientFlow} for Example \ref{ex:3PointFlow} with $\delta = 0.5 < 1$: evolution goes from {\color{red}red (right)} to {\color{blue}blue (left)}. (Note that both axis scales are equal.) Increasing $t$ has the qualitative effect of slowing the flow.}
  \label{fig:3PointFlow1_20210222}
\end{figure}

\begin{figure}[h]
  \centering
  \includegraphics[trim = 40mm 115mm 40mm 115mm, clip, width=\columnwidth,keepaspectratio]{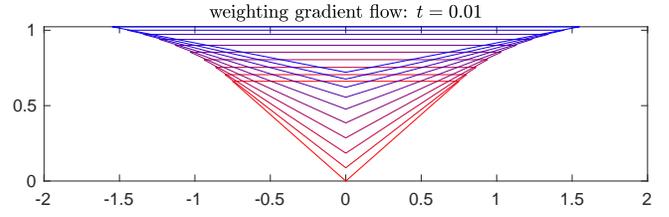}
  \caption{Effect of weighting gradient flow \eqref{eq:weightingGradientFlow} for Example \ref{ex:3PointFlow} with $\delta = 1.5 > 1$ (and switching coordinate axes for convenience): evolution goes from {\color{red}red (bottom)} to {\color{blue}blue (top)}. (Note that both axis scales are equal.) Increasing $t$ has the qualitative effect of slowing the flow.}
  \label{fig:3PointFlow2_20210222}
\end{figure}

\section{\label{sec:EnhancingDiversity}Enhancing diversity}

% NB. All plots here were generated with a version of wgfOptimize prior to 2.19.2021, where some vectorization/streamlining affected things within sqrt(machine epsilon) per step--but these things cumulate and diverge to give different (but qualitatively equivalent [maybe better? haven't checked beyond eyeball]) results. So it will be impossible to _exactly_ reproduce these without changing code. I've left the original computations in comments in wgfOptimize though.

Following \cite{ishibuchi2012two}, we apply the ideas sketched above to a toy problem where the objective function $f$ has three components, each measuring the distance to a vertex of a regular triangle with vertices in $S^1$. The application is mostly conceptually straightforward, but we mention a few implementation details: 
\begin{itemize}
	\item We begin with a uniformly distributed sample of $n_0 = 10^3$ points in the disk of radius $1.25$, and retain $n$ points that are dominated by no more than $\delta = 0.1$;
	\item Replace any badly behaving points (e.g., out of bounds or \texttt{NaN}s) with their predecessors; 
	\footnote{
		One notion of ``bad behavior'' that we do not curtail is of $f(x+dx)-f(x)$ being a poor approximation of $dy$. The reason is that an effective approximation via Taylor's theorem requires computing the Hessian, which is expensive. Given an alternative route to an effective approximation, we could instill the required discipline in the pullback/recomputation step below.
	}
	\item Introduce a ``speed factor'' $S_j := 1-2\frac{\text{dom}_j}{\max_k \text{dom}_k}$, where $\text{dom}_j$ is the number of points dominating the $j$th point;
	\item Evolve the $n$ points under a modulated weighting gradient flow \emph{on the objective space} with $t = t_+$ as $dy_j = ds \cdot S_j (\hat \nabla w)_j$ for only $N = 10$ steps and step size $ds = \sqrt{\langle \min_{k \ne j} (d_f)_{jk} \rangle/n}$, where the pullback metric on solution space is $d_f(x,x') := d(f(x),f(x'))$;
	\item Pull back the weighting gradient flow from the objective space to the solution space using the pseudoinverse of the Jacobian, then recompute points in the objective space.
\end{itemize}

The result of this experiment is depicted in Figures \ref{fig:ParetoTriSol20210212} and \ref{fig:ParetoTriObj20210212}. The salutary effect on diversity in objective (and solution) space is apparent. This can be quantified via the objective space magnitude functions, as shown in Figures \ref{fig:ParetoTriObjMag20210212} and \ref{fig:yParetoTriObjMag}.

\begin{figure}[h]
  \centering
  \includegraphics[trim = 40mm 105mm 40mm 95mm, clip, width=\columnwidth,keepaspectratio]{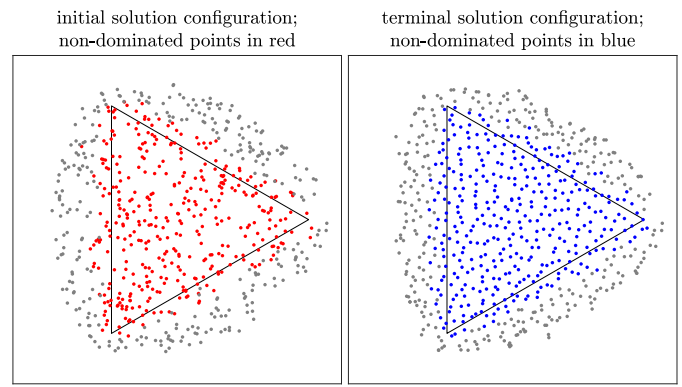}
  \caption{Comparision of {\color{red}initial (red; left)} and {\color{blue}terminal (blue; right)} locations of points in the solution space. The weighting gradient flow produces more evenly distributed terminal points. The triangle defining objective components (by distance to its vertices) is shown. The actual Pareto front is the interior of the triangle and the area displayed is $[-1,1]^2$.}
  \label{fig:ParetoTriSol20210212}
\end{figure}

\begin{figure}[h]
  \centering
  \includegraphics[trim = 30mm 95mm 30mm 90mm, clip, width=\columnwidth,keepaspectratio]{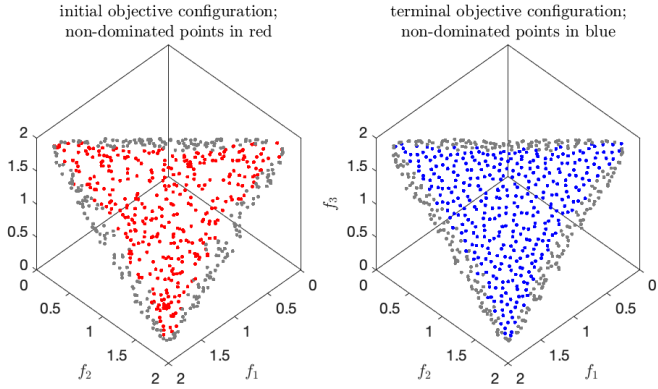}
  \caption{Comparision of {\color{red}initial (red; left)} and {\color{blue}terminal (blue; right)} locations of points in the objective space. The terminal points are more evenly distributed.}
  \label{fig:ParetoTriObj20210212}
\end{figure}

\begin{figure}[h]
  \centering
  \includegraphics[trim = 25mm 100mm 25mm 100mm, clip, width=\columnwidth,keepaspectratio]{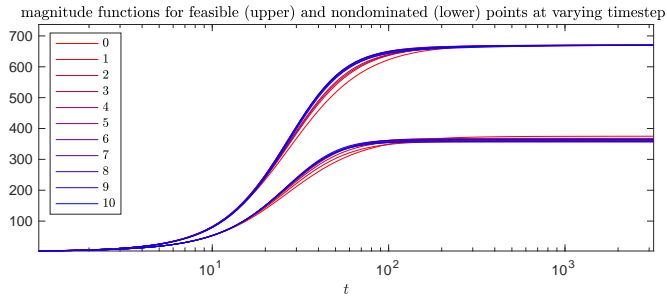}
  \caption{
  Magnitude functions (note the logarithmic horizontal axis) for feasible (upper curves) and nondominated points (lower curves) at timesteps (indicated in legend: the horizontal axis corresponds to scale, not time) from {\color{red}$0$ (red)} to {\color{blue}$N = 10$ (blue)}, under the evolution of the (modulated) weighting gradient flow. 
  Recall from \S \ref{sec:Scales} that magnitude functions always have left and right limits at $1$ and $n$, respectively, so only intermediate values indicate anything meaningful. Here, the magnitude increases significantly over time at scales $t \approx 50$, while for each timestep $t_+ < 32$. The number of nondominated points fluctuates slightly, starting at 375 before decreasing to 357 and increasing back to 364; there are $n = 670$ feasible points. 
  }
  \label{fig:ParetoTriObjMag20210212}
\end{figure}

\begin{figure}[h]
  \centering
  \includegraphics[trim = 10mm 95mm 20mm 100mm, clip, width=\columnwidth,keepaspectratio]{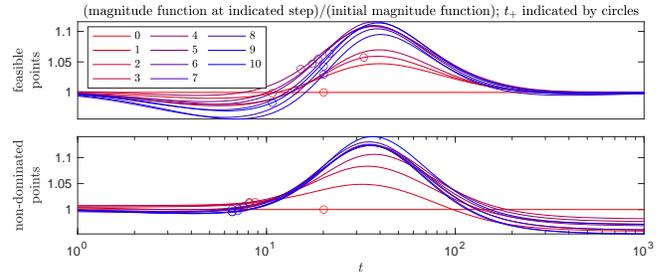}
    \caption{Magnitude increases at scales above $t_+$, where magnitude equals diversity. (Top) Magnitude function quotients ($\text{current}/\text{initial}$; cf. Figure \ref{fig:ParetoTriObjMag20210212}) at various timesteps for feasible points. Timesteps of numerators are indicated via color, going from {\color{red}red at the initial timestep (0)} to {\color{blue}blue at the final timestep ($N = 10$)}: the denominator is the function at the initial timestep. Circles indicate the scales $t_+$. (Bottom) As above, but for non-dominated points.}
  \label{fig:yParetoTriObjMag}
\end{figure}

Using 12 objectives defined by the distance to vertices of a regular dodecagon but otherwise proceeding in exactly the same way yields the solution space result shown in Figure \ref{fig:ParetoDodecaSol20210212}.

\begin{figure}[h]
  \centering
  \includegraphics[trim = 40mm 105mm 40mm 95mm, clip, width=\columnwidth,keepaspectratio]{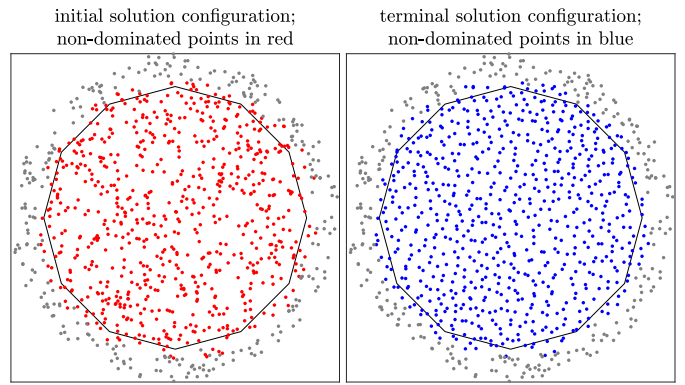}
  \caption{As in Figure \ref{fig:ParetoTriSol20210212} but for a dodecagon.}
  \label{fig:ParetoDodecaSol20210212}
\end{figure}

\section{\label{sec:Benchmark}Performance on benchmarks}

% untitledPlatEMO5.m

The effectiveness of the (modulated) weighting gradient flow approach hinges on the ability to cover and thereby ``keep pressure on'' the Pareto front. A straightforward way to do this is to use a MOEA to produce an initial overapproximation of the Pareto front as in \cite{guariso2020improving}, and then improve the diversity of the overapproximation via the weighting gradient flow. 
We proceed to detail the results of an experiment along these lines. For the experiment we considered two leading MOEAs (NSGA-II \cite{deb2002fast} and SPEA2 \cite{zitzler2001spea2}) and two leading benchmark problem sets (DTLZ \cite{deb2001scalable} \footnote{
For DTLZ, we considered only the two most relevant problems, viz. DTLZ4 and DTLZ7. DTLZ4 was formulated ``to investigate an MOEA's ability to maintain a good distribution of solutions'' and DTLZ7 was formulated to ``test an algorithm's ability to maintain subpopulation in different Pareto-optimal regions'' \cite{deb2001scalable}. (NB. One approach for this, not pursued here, is to resample points so that the diversity per point in each connected component of the Pareto front is approximately equal. For the application of topological data analysis to Pareto fronts, see \cite{hamada2018data}.) The other problems in the DTLZ set are designed to evaluate other desiderata, e.g. convergence to the Pareto front.
} 
and WFG \cite{huband2006review}), all implemented in PlatEMO version 2.9 \cite{tian2017platemo}. For each problem, we used 10 decision variables, three objectives (to enable visualization), and performed 10 runs (which appears quite adequate for characterization purposes) with population size 250 and $10^4$ fitness evaluations. We then took $N = 10$ timesteps for the weighting gradient flow as before.

Figure \ref{fig:yFirstRunMag_WFG2} (cf. Figure \ref{fig:yParetoTriObjMag}) shows magnitude functions corresponding to various timesteps of the (modulated) weighting gradient flow applied to the results of NSGA-II on the benchmark problem WFG2. Feasible points show a diversity (as measured by magnitude at scale $t_+$ for the feasible objective points) increase of about 10 percent, whereas non-dominated points show a diversity increase of several percent as well, even as the total number of non-dominated points decreases by about 15 percent.

\begin{figure}[h]
  \centering
  \includegraphics[trim = 10mm 95mm 20mm 100mm, clip, width=\columnwidth,keepaspectratio]{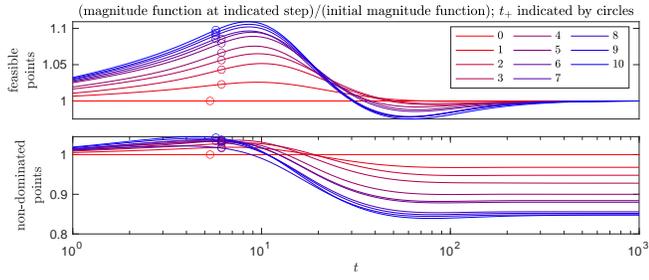}
  \caption{Magnitude increases at scales above $t_+$, where magnitude equals diversity (cf. Figure \ref{fig:yParetoTriObjMag}). (Top) Magnitude function quotients at various timesteps for feasible points under the evolution of the (modulated) weighting gradient flow applied to a solution of the WFG2 benchmark problem obtained via NSGA-II. The horizontal axis $t$ indicates the scale parameter; timesteps of numerators are indicated via color, going from {\color{red}red at the initial timestep (0)} to {\color{blue}blue at the final timestep (10)}: the denominator is the function at the initial timestep. Circles indicate the scales $t_+$. (Bottom) As above, but for non-dominated points. (NB. The corresponding evolution of objective space configurations is shown in Figure \ref{fig:yFirstRunFlow_NSGAII_WFG2}.)}
  \label{fig:yFirstRunMag_WFG2}
\end{figure}

In lieu of showing details such as in Figures \ref{fig:yFirstRunMag_WFG2} for multiple runs and problem instances, we produce an ensemble characterization in Figure \ref{fig:yEnsembleCharacterization}. The figure shows that the number of non-dominated points decreases since the weighting gradient flow pushes some points a short distance away from the Pareto front (as illustrated in Figure \ref{fig:igdEnsembleCharacterization}) before they are halted or reversed. The figure also shows that the diversity of non-dominated points generally increases slightly, and the diversity of feasible points increases significantly. As a consequence, the diversity contributions of individual solutions (as measured by the average weighting, i.e., the magnitude of non-dominated points divided by their cardinality) also increases significantly. For less challenging problems such as shown above for distances to vertices of regular polygons, the number of non-dominated points will decrease less (if at all, cf. Figure \ref{fig:ParetoTriObjMag20210212}), and the diversity gains will be enhanced. 

On the other hand, the positive effects of the weighting gradient flow are considerably reduced in the case of SPEA2, which produces a visibly more uniform 
\footnote{
As a reminder, uniformity and diversity are not the same thing.
}
distribution in objective space than NSGA-II for all of the problems we consider: see Figures \ref{fig:yFirstRunFlow_SPEA2_WFG2}-\ref{fig:yFirstRunFlow_SPEA2_WFG3}. Indeed, the weighting gradient flow appears to \emph{decrease} this uniformity; the formation of a gap just behind the boundary along with a slight increase in the population near the boundary are the main visible indicators that something useful (at least for DTLZ4, WFG2, WFG3, WFG6, and WFG8, per Figure \ref{fig:yEnsembleCharacterization}) is actually happening. 

\begin{figure}[h]
  \centering
  \includegraphics[trim = 10mm 115mm 10mm 115mm, clip, width=\columnwidth,keepaspectratio]{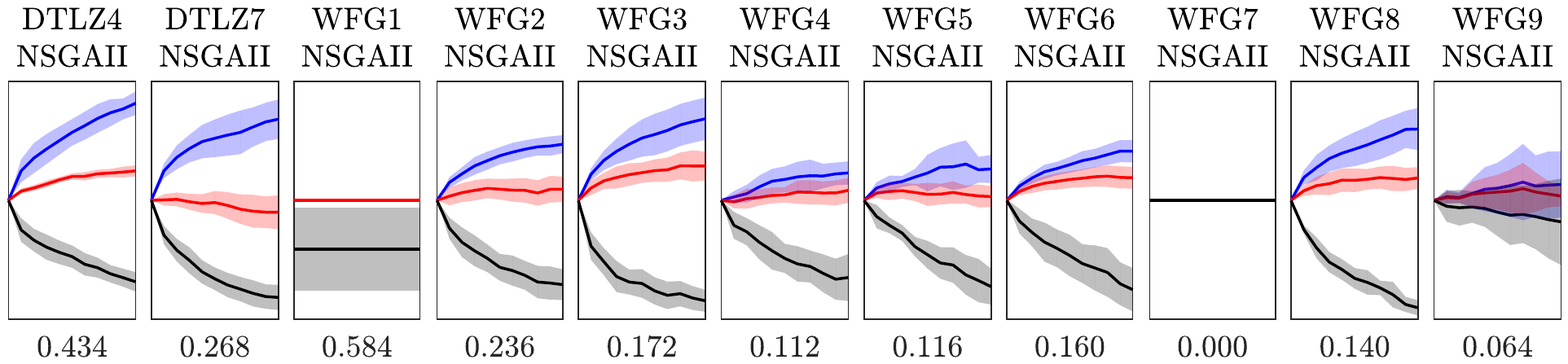} \\
  \includegraphics[trim = 10mm 120mm 10mm 120mm, clip, width=\columnwidth,keepaspectratio]{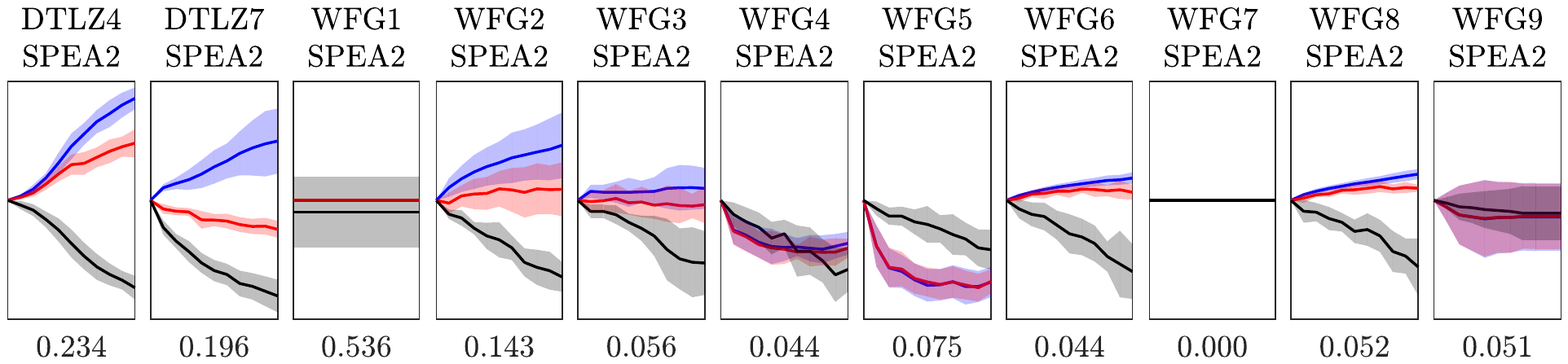} 
  \caption{Diversity of solutions increases markedly under the weighting gradient flow, even as some points become slightly dominated. (Top) Average diversity quotients of {\color{blue}feasible (blue)} and {\color{red}non-dominated (red)} points under the weighting gradient flow along with proportion of population that remains non-dominated (black). Here the diversity is the magnitude at scale $t_+$: for instance, the diversity quotients of feasible and non-dominated points for a particular run of NSGA-II on WFG2 are respectively indicated by the circles in upper and lower panels of Figure \ref{fig:yFirstRunMag_WFG2}. Shaded bands indicate one standard deviation. All panels have the same horizontal axis, viz., the number of timesteps (from 0 to $N = 10$). The vertical axes are $[1-\Delta,1+\Delta]$, where $\Delta$ is shown below each panel. Not shown explicitly is the average weighting of non-dominated points, i.e., the red curve divided by the black one, but so long as the colored bands already shown are visibly separate, this consistently lies above the blue band. 
(Bottom) As for the top panels, but for SPEA2.}
  \label{fig:yEnsembleCharacterization}
\end{figure}

% As above but with the change
%            impFeas(ell) = max(magFun(ell,:)./magFun(1,:));
%            impND(ell) = max(magFunND(ell,:)./magFunND(1,:));

\begin{figure}[h]
  \centering
  \includegraphics[trim = 10mm 115mm 10mm 115mm, clip, width=\columnwidth,keepaspectratio]{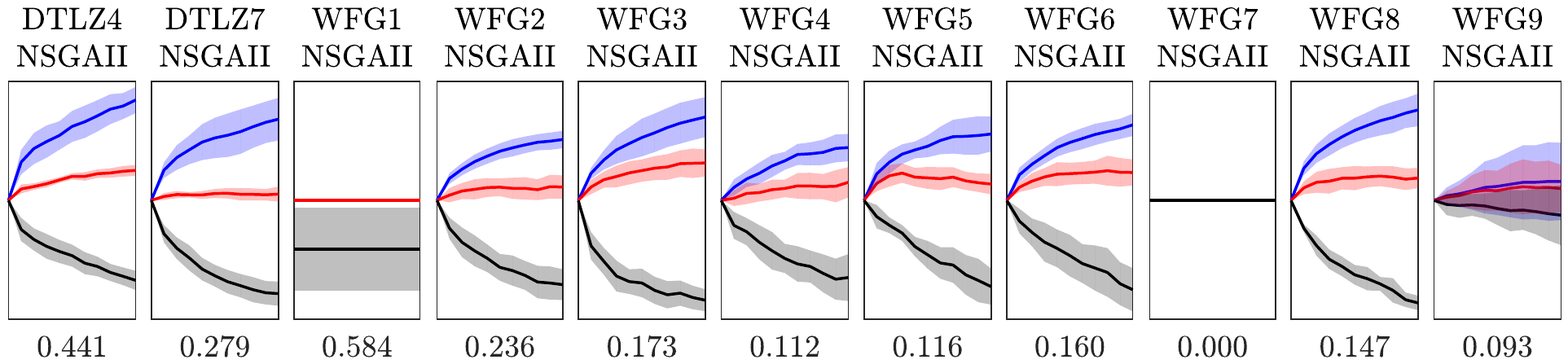} \\
  \includegraphics[trim = 10mm 120mm 10mm 120mm, clip, width=\columnwidth,keepaspectratio]{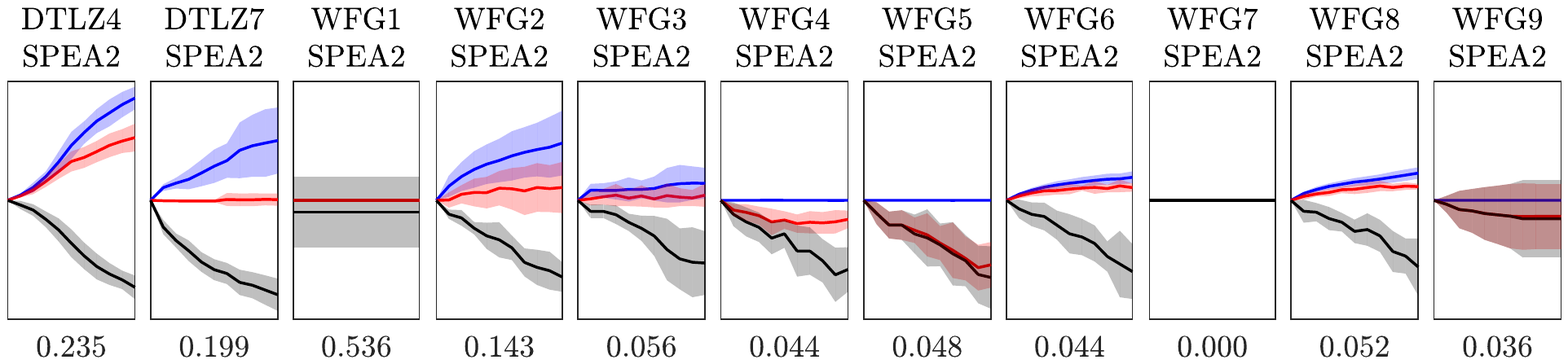} 
  \caption{As in Figure \ref{fig:yEnsembleCharacterization}, but for diversity taken as the magnitude at the scale maximizing the quotient by the initial timestep.}
  \label{fig:zEnsembleCharacterization}
\end{figure}

Although Figure \ref{fig:yEnsembleCharacterization} shows that the weighting gradient flow causes a significant proportion of points to become dominated by others, Figure \ref{fig:igdEnsembleCharacterization} uses the \emph{inverted generational distance} (IGD) relative to uniformly distributed reference points on Pareto fronts \cite{tian2018sampling} to show that this qualitative change in dominance is belied by only minor quantitative changes in the distance to Pareto fronts. 
\footnote{
Recall that the IGD of an objective point set $X$ relative to a reference point set $R$ is $\frac{1}{|R|} \sum_{r \in R} \min_{x \in X} d(x,r)$.
}
(Note that the relatively large increases in IGD for DTLZ4 and DTLZ7 are consequences of starting from a low baseline.) That is, feasible points give a better quantitative picture of diversification performance than nondominated points, especially in light of use cases in which the weighting gradient flow is not limited to postprocessing.

\begin{figure}[h]
  \centering
  \includegraphics[trim = 10mm 115mm 10mm 115mm, clip, width=\columnwidth,keepaspectratio]{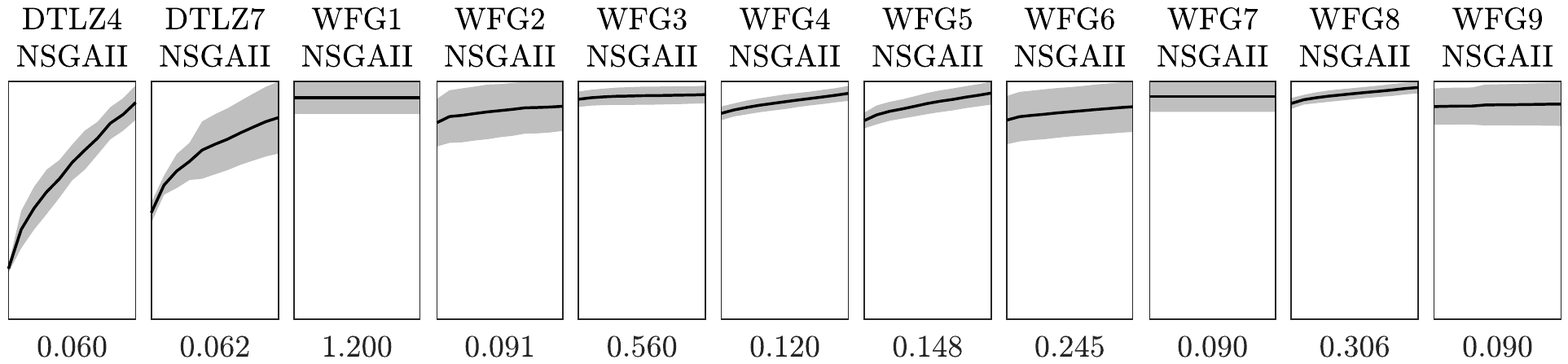} \\
  \includegraphics[trim = 10mm 120mm 10mm 120mm, clip, width=\columnwidth,keepaspectratio]{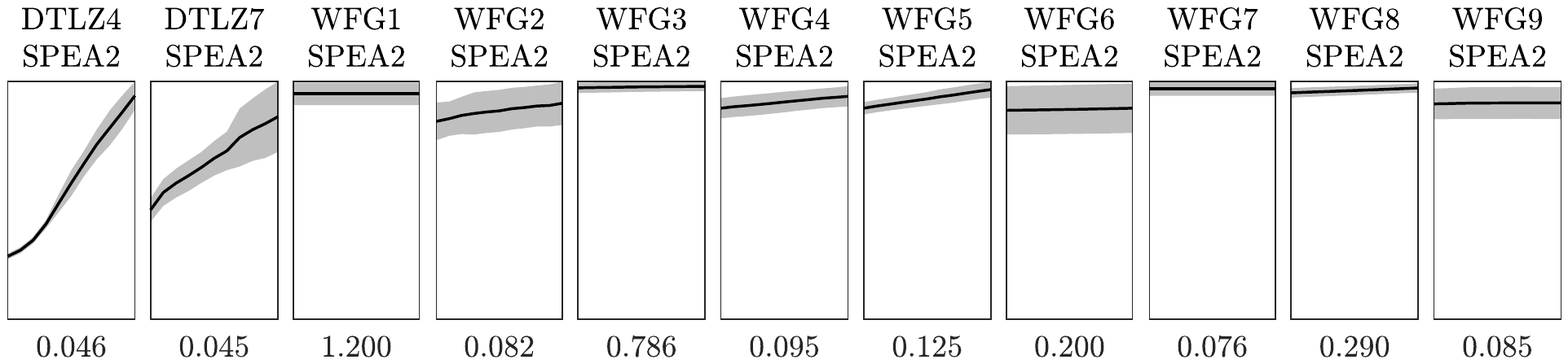} 
  \caption{The weighting gradient flow only slightly affects the quantitative dominance behavior of points, as measured by IGD. (Top) IGD under the weighting gradient flow starting from the results of NSGA-II runs, using uniformly distributed reference points on Pareto fronts. Shaded bands indicate one standard deviation. All panels have the same horizontal axis, viz., the number of timesteps (from 0 to $N = 10$). The vertical axes are $[0,y]$, where $y$ is shown below each panel. 
  (Bottom) As for the top panels, but for SPEA2.
   }
  \label{fig:igdEnsembleCharacterization}
\end{figure}

Rather than relying solely on a delicate characterization of diversity, we also visualize some of the results directly: this is the rationale for three-objective problems. Indeed, Figures \ref{fig:yFirstRunFlow_NSGAII_WFG2}-\ref{fig:yFirstRunFlow_NSGAII_WFG3} show how diversity in objective space is promoted for WFG2-3. Figures \ref{fig:yFirstRunFlow_SPEA2_WFG2}-\ref{fig:yFirstRunFlow_SPEA2_WFG3} show analogous results for SPEA2.

Careful inspection reveals that the weighting gradient flow tends to induce a gap between the boundary of the non-dominated region and its interior, which is consistent with the generally observed phenomenon that the largest weights in finite subsets of Euclidean space tend to occur on boundaries and the smallest weights immediately ``behind'' the boundary. Meanwhile, the boundary region tends to become slightly more populated. 
\footnote{
This highlights the need to distinguish between diversity and uniformity. In fact, the maximally diverse probability distribution on the interval $[0,L]$ is $\frac{1}{2+L}(\delta_0 + \lambda|_{[0,L]} + \delta_L)$, where Dirac measures and a restriction of Lebesgue measure are indicated on the right hand side \cite{leinster2019maximum}. Thus it is only in a suitable limit that boundary effects can be ignored in considerations of diversity. 
}
From the perspective of a MOEA, this is frequently a benefit, since extremal and non-extremal points on the non-dominated approximation of the Pareto front frequently carry different practical significance.
\footnote{
Using a scale $t > t_+$ for the weighting gradient flow would tend to diminish the distinction between uniformity (which is not a function of scale) and diversity (which is a function of scale). In other words, our experiments deliberately magnify this distinction to the greatest possible extent.
}

\begin{figure}[h]
  \centering
  \includegraphics[trim = 33mm 90mm 115mm 85mm, clip, width=.49\columnwidth,keepaspectratio]{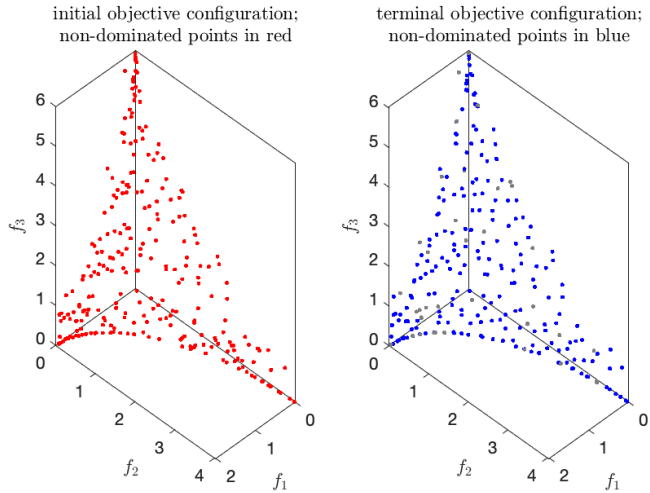}
  \includegraphics[trim = 113mm 90mm 35mm 85mm, clip, width=.49\columnwidth,keepaspectratio]{yFirstRunFlow_NSGAII_WFG220210222.pdf}
  \caption{(Left) Initial configuration in objective space for WFG2 after a NSGA-II run. (Right) Terminal configuration for WFG2 after subsequently evolving under the weighting gradient flow. Dominated points are gray.}
  \label{fig:yFirstRunFlow_NSGAII_WFG2}
\end{figure}

\begin{figure}[h]
  \centering
  \includegraphics[trim = 33mm 90mm 115mm 85mm, clip, width=.49\columnwidth,keepaspectratio]{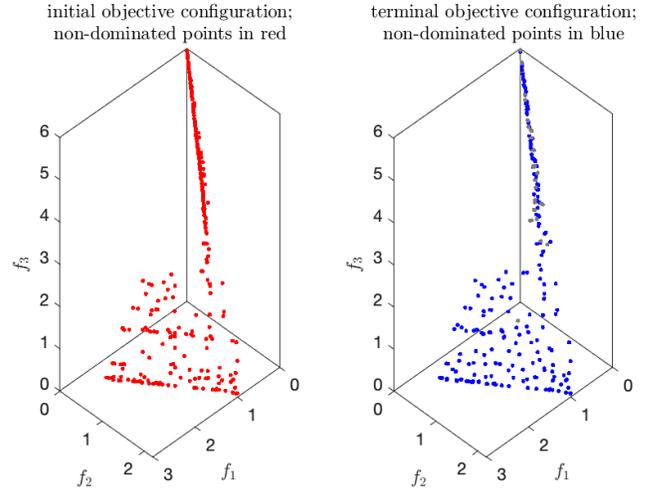}
  \includegraphics[trim = 113mm 90mm 35mm 85mm, clip, width=.49\columnwidth,keepaspectratio]{yFirstRunFlow_NSGAII_WFG320210222.pdf}
  \caption{As in Figure \ref{fig:yFirstRunFlow_NSGAII_WFG2}, but for WFG3.}
  \label{fig:yFirstRunFlow_NSGAII_WFG3}
\end{figure}

\begin{figure}[h]
  \centering
  \includegraphics[trim = 33mm 90mm 115mm 85mm, clip, width=.49\columnwidth,keepaspectratio]{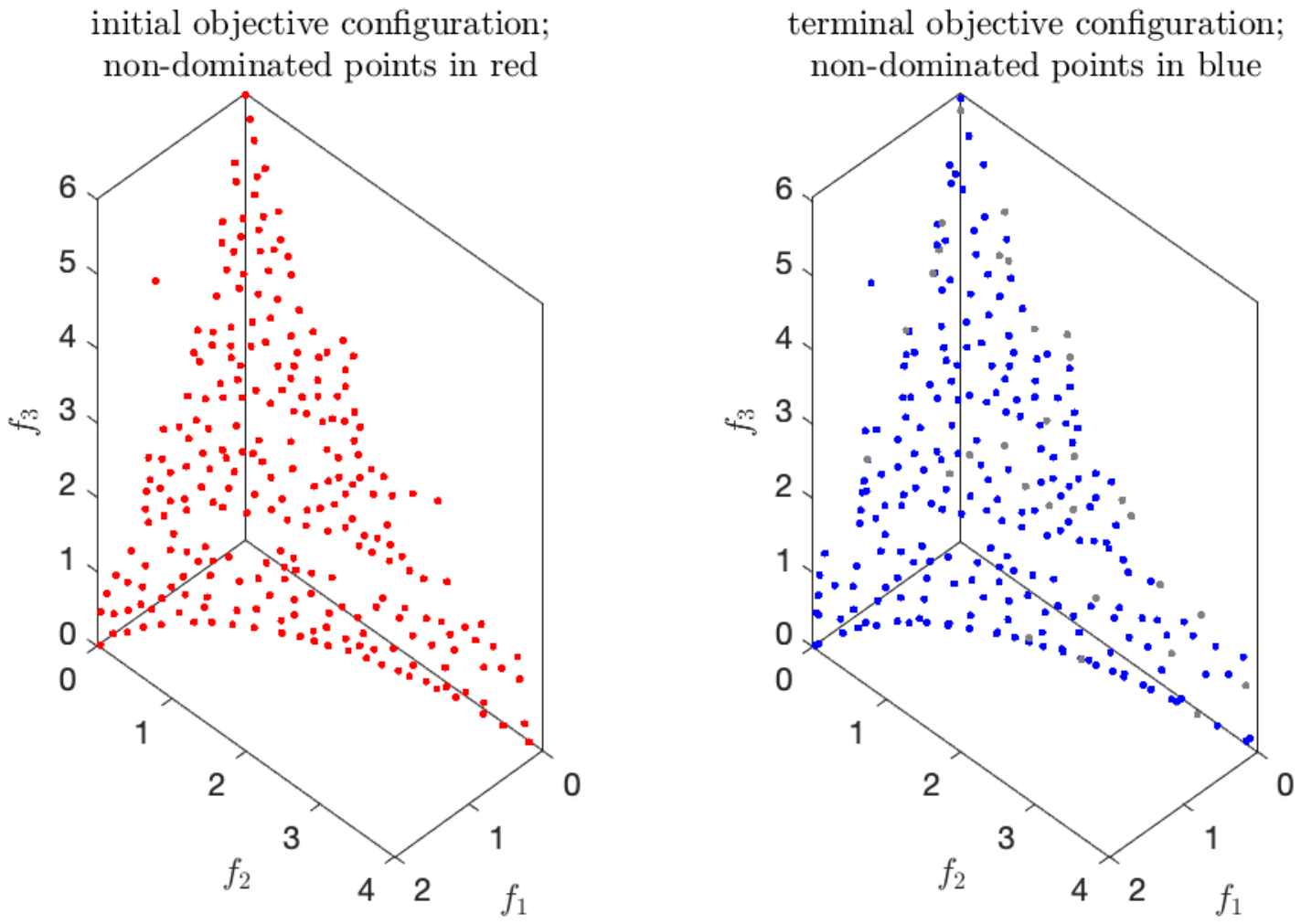}
  \includegraphics[trim = 113mm 90mm 35mm 85mm, clip, width=.49\columnwidth,keepaspectratio]{yFirstRunFlow_SPEA2_WFG220210223.pdf}
  \caption{As in Figure \ref{fig:yFirstRunFlow_NSGAII_WFG2}, but for SPEA2.}
  \label{fig:yFirstRunFlow_SPEA2_WFG2}
\end{figure}

\begin{figure}[h]
  \centering
  \includegraphics[trim = 33mm 90mm 115mm 85mm, clip, width=.49\columnwidth,keepaspectratio]{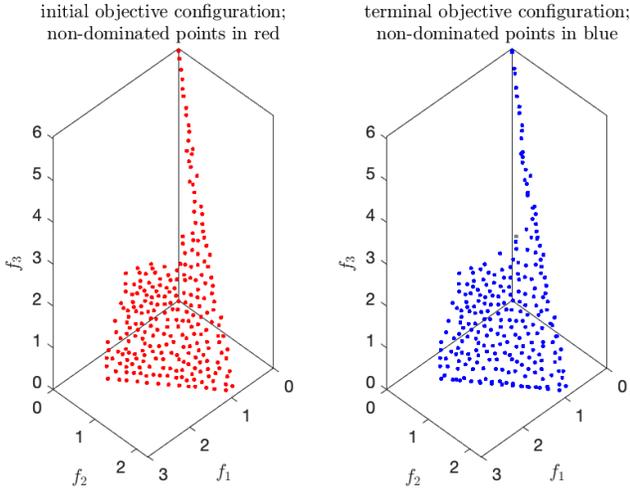}
  \includegraphics[trim = 113mm 90mm 35mm 85mm, clip, width=.49\columnwidth,keepaspectratio]{yFirstRunFlow_SPEA2_WFG320210223.pdf}
  \caption{As in Figure \ref{fig:yFirstRunFlow_NSGAII_WFG3}, but for SPEA2.}
  \label{fig:yFirstRunFlow_SPEA2_WFG3}
\end{figure}

\section{\label{sec:Extensions}Algorithmic extensions}

\subsection{\label{sec:MOWGF}Multi-objective weighting gradient flow}

We can combine the weighting gradient flow with a multi-gradient descent strategy in a way somewhat akin to \cite{desideri2012multiple}. The basic ideas that build on the weighting gradient flow are:
\begin{itemize}
%	\item Replace any badly behaving points (e.g., out of bounds or \texttt{NaN}s) with their predecessors;
%	\item Introduce a ``speed factor'' $S_j := 1-2\frac{\text{dom}_j}{\max_k \text{dom}_k}$, where $\text{dom}_j$ indicates the number of points dominating the $j$th point;
	\item Introduce variable regularizing terms $\lambda_w$ and $\lambda_f$ for the weighting and function gradient flows, respectively; % (as a default, take $\lambda_w$ and $\lambda_f$ alternately switching between zero and one); 
	\item Form the objective-space differentials $dy_j = ds \cdot [\lambda_w S_j (\hat \nabla w)_j + \lambda_f \sum_\ell (\hat \nabla f)_\ell ]$, where the sum is over $\ell$ such that $\langle (\hat \nabla w)_j,(\hat \nabla f_\ell)_j \rangle > 0$.
\end{itemize}

While we have tried this technique in isolation on MOEA benchmark problems, the results are poor. However, this should come as no surprise: the benchmark problems are designed to frustrate MOEAs, to say nothing of techniques involving gradients.

\subsection{\label{sec:Recycling}Recycling function evaluations}

In our experiments with post-processing the output of MOEAs, the weighting gradient flow evolution took time comparable to (and in the case of NSGA-II, slightly more than) the MOEA itself. Most of the time is spent evaluating the fitness function: apart from an initialization step, the evaluations are performed to compute Jacobians in service of pullback operations, and a lesser number are performed to compute pushforwards to maintain consistency. 

However, our motivating problems require significant time (on the order of a second) for function evaluations. This demands a more efficient pullback scheme that minimizes (or avoids altogether) function evaluations, even if the results are substantially worse. A reasonable idea is ``recycling'' in a sense similar to that employed in some modern Monte Carlo algorithms \cite{frenkel2004speed}. Specifically, rather than computing a good approximation to the Jacobian by evaluating functions afresh at very close points along coordinate axes, settle instead for an approximation of lesser quality that exploits existing function evaluations. We have implemented this approach in concert with a \emph{de novo} computation of the Jacobian in the event that this initial Jacobian estimate (IJE) does not have full rank. Our experiments suggest that this approach works reasonably well: for a typical run from \S \ref{sec:Benchmark}, the number of function evaluations is reduced from 30250 to 2750, and the actual results are broadly comparable (sometimes better, sometimes worse): see Figures \ref{fig:fastEnsembleCharacterization} and \ref{fig:fastIgdEnsembleCharacterization}.

\begin{figure}[h]
  \centering
  \includegraphics[trim = 10mm 115mm 10mm 115mm, clip, width=\columnwidth,keepaspectratio]{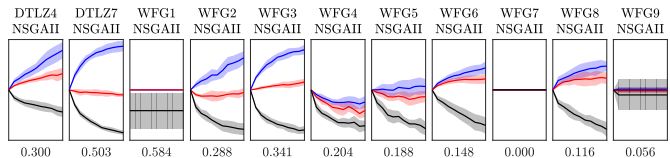} 
  \caption{As in the top panel of Figure \ref{fig:yEnsembleCharacterization}, but for a Jacobian approximation that uses existing function evaluations, increasing speed at the cost of accuracy.}
  \label{fig:fastEnsembleCharacterization}
\end{figure}

\begin{figure}[h]
  \centering
  \includegraphics[trim = 10mm 120mm 10mm 115mm, clip, width=\columnwidth,keepaspectratio]{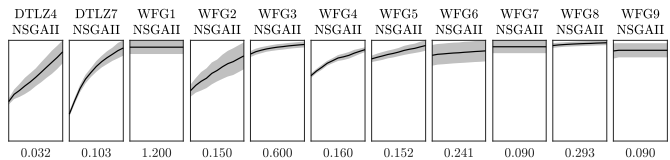} 
  \caption{As in Figure \ref{fig:igdEnsembleCharacterization}, but for a Jacobian approximation that uses existing function evaluations, increasing speed at the cost of accuracy.}
  \label{fig:fastIgdEnsembleCharacterization}
\end{figure}

This strategy is bound to work poorly if our evaluation points lie on a manifold of nonzero codimension or low curvature, because in such circumstances a matrix that transforms vectors from a base point to evaluation points into (a small multiple of) the standard basis will have large condition number. 
\footnote{
A more refined approach would be to exploit the ``good'' points, evaluating the function afresh in new directions in order to get reasonably well-behaved Jacobian estimates. A rough sketch of this approach is to compute a SVD of the IJE, then use this in turn to estimate the rank. If the rank is not full, then revise the IJE as follows. First, the pivots for the reduced row echelon form of the IJE yield a maximal linearly independent subset of the columns. To augment these columns, reuse the SVD to produce their orthocomplement (which equals the kernel of the transpose of the IJE, which itself is determined by the transpose of the rows of the first term in the SVD that correspond to zero singular values). After performing any necessary function evaluations in the orthocomplement, finally apply a linear transformation that yields a revised Jacobian estimate in the standard basis. This more refined approach is sufficiently intricate that we have not implemented it. 
}
However, these situations are relatively unlikely to present major problems in practice, and the recycling approach is likely to be useful in most if not all situations where function evaluations are expensive.

\subsection{\label{sec:Spread}Using spread instead of magnitude}

A more principled approximation of the magnitude than \cite{ulrich2010defining,ulrich2011maximizing} use is the \emph{spread} \cite{willerton2015spread}. In some respects, the spread is even better behaved than the magnitude and requires only $O(n^2)$ operations with a naive algorithm. The idea is to take the \emph{Leinster-Cobbold diversity} \cite{leinster2012measuring} of order zero for the uniform distribution, i.e.
\begin{equation}
\label{eq:spread}
E_0(d) := \sum_j \frac{1}{\sum_k \exp(-d_{jk})}.
\end{equation}
It turns out that $E_0(d)$ is bounded above by the maximum diversity of $d$ (which itself is bounded above by magnitude if $d$ is positive definite) and also by $\exp \max_{jk} d_{jk}$. Furthermore, $E_0(td)$ increases from $1$ at $t = -\infty$ to $n$ at $t = \infty$. Finally, $E_0(d)$ itself admits a good approximation by truncating the sums involved, and with some care this can reduce the computational effort compared to the naive algorithm.

If we define $a_j := 1/\sum_k \exp(-d_{jk})$, then $E_0(d) = \sum_j a_j$, and we can consider the gradient flow obtained by using the ``spread vector'' $a$ in place of a weighting. However, this does not yield useful results, as shown in Figure \ref{fig:spreadEnsembleCharacterization}.

\begin{figure}[h]
  \centering
  \includegraphics[trim = 10mm 115mm 10mm 115mm, clip, width=\columnwidth,keepaspectratio]{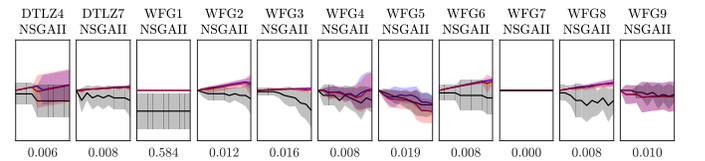} 
  \caption{As in the top panel of Figure \ref{fig:yEnsembleCharacterization}, but for a gradient flow using the ``spread vector'' $a$ in lieu of a weighting at scale $t_+$. Not shown: dilating $d$ by $t_+$ does not have any substantial effect.}
  \label{fig:spreadEnsembleCharacterization}
\end{figure}

\subsection{\label{sec:OtherConsiderations}Other considerations}

At present, we have not explored stochastic gradient descent and/or line search techniques to improve performance: however, it seems likely that these would be at best delicate to implement successfully. While we could restrict the summations involved in the weighting gradient flow, these have a comparatively small impact on runtime. Other avenues for acceleration appear to be restricted to computing or approximating weightings more efficiently. There is much that can be done in this vein, but we omit discussion for economy.

It seems fruitful to consider non-differentiable variants of the weighting gradient flow (for this, see \S \ref{sec:DiscreteObjective}). With this in mind, we recall an abstract perspective on genetic algorithms provided by \cite{mumford2010pattern}, which points out in turn that a sequential Monte Carlo  (SMC; e.g., a particle filter) algorithm is essentially a Markov chain Monte Carlo (MCMC) algorithm operating over a population versus an individual, and that a genetic algorithm is essentially a SMC algorithm that makes use of genetic-type operations.
\footnote{
Nominally, these are usually thought of as, e.g. mutation, crossover, and selection. But at a more abstract level, a genetic algorithm sampling from a Markov random field has an objective in which all of the terms involve local interactions between variables. A reasonable genetic-type operation must respect the topology of these interactions, as encoded in a factor graph (or equivalently, the abstract simplicial complex encoding the relation between factors and variables). In other words, the operation must involve a reasonable mechanism for segmentation, i.e., partitioning variables and manipulating them in a way that is compatible with the partition.
}
Now a MOEA is essentially a genetic algorithm that addresses multiple objectives. 

From this perspective, and abstracting some of the techniques we have developed, we can envision the broad outlines of a MOEA that leverages weightings in a principled and coherent set of internal mechanisms with the aim of outperforming existing algorithms:
\begin{itemize}
	\item Population: use an existing SMC algorithm or extend a fast multiple-proposal MCMC technique such as \cite{huntsman2020fast} to accept multiple proposals and maintain a population;
	\item Genetic-type operations: intersperse general-purpose (and possibly also domain-specific) genetic-type operations with a mechanism for favoring increases to individual weighting components (e.g., eliminate points with small absolute values of weighting components and replenish the population with ``spores'' distributed around points with large weighting components);
	\item Multiple objectives: compare state-space moves that increase weighting components with those that favor objectives, and select moves that accomplish both, with regularization terms to make an explore (weighting)-exploit (objective) tradeoff.
\end{itemize}

\section{\label{sec:DiscreteObjective}Discrete objective space}

Suppose that $f : X \rightarrow Y \subset \mathbb{R}^m$ for $Y$ countable and that $\mathbb{R}^m$ is endowed with a suitable distance $d$ which need not be a metric, let alone Euclidean. Now the weighting gradient \eqref{eq:gradient} still makes sense on $Y$, though the weighting gradient flow \eqref{eq:weightingGradientFlow} does not make sense any more. However, for $x \in X$ and $\delta x$ a suitable random perturbation, comparing $f(x+\delta x)$ and $f(x) + \delta s \cdot \hat \nabla w|_{f(x)}$ over multiple realizations of $\delta x$ 
%(with bigger gradients nominally entailing more ``effort'' placed into realizations: see below) 
yields a stochastic analogue of (the pullback of) \eqref{eq:weightingGradientFlow}:
\begin{equation}
\label{eq:stochasticWeightingGradientFlow}
\hat \delta x|_x := \arg \min_{\delta x} d \left ( f(x+\delta x) , f(x)+\delta s \cdot \hat \nabla w|_{f(x)} \right ).
\end{equation}

A sensible heuristic for $\delta s$ in \eqref{eq:stochasticWeightingGradientFlow} is to enforce something like $\langle |\delta s \cdot \hat \nabla w | \rangle = \delta d/2$ initially, where $\delta d$ is the average Euclidean distance between pairs of initial points in $Y$ with minimal nonzero $d$. That is, take $\delta s = \delta d / 2 \langle | \hat \nabla w | \rangle$. This would seem to encourage 
%(though certainly not ensure) 
the movement of significant numbers of points while a significant fraction of points would also remain stationary for the first timestep. Meanwhile, a reasonable heuristic for $\delta x$ in the case $X \subset \mathbb{R}^M$ is to take $\delta x \sim \mathcal{N}(\mu,\Sigma)$, where $\mu = D_x f \backslash (\hat \nabla w|_{f(x)})$, $\Sigma = \Delta ( | \mu |,\delta_j,\dots,\delta_j )$ in an(y) orthonormal basis with first vector $\mu$, 
\footnote{
To produce a matrix $B$ whose columns form an orthonormal basis with first vector $a$ in MATLAB, use \texttt{ahat = a(:).'/norm(a); B = [ahat;null(ahat).']';}.
}
and where $\delta_j := \frac{1}{2}\min_{k \ne j} |x_j-x_k|$.

Note that while \eqref{eq:stochasticWeightingGradientFlow} can be approximated using a metaheuristic, the spirit of applications is to take a relatively small number of candidate perturbations $\delta x$ versus finding the optimal perturbation \emph{per se}. With this in mind, define the \emph{effort} $E(x)$ to be the number of candidate perturbations for $x$. If we can neglect any effects of perturbing the same points over successive timesteps, a sensible heuristic for effort is
\begin{equation}
\label{eq:stochasticWeightingGradientFlowEffort}
E(x) = \left \lceil C \frac{ \left | \hat \nabla w|_{f(x)} \right |}{\sum_{x'} \left | \hat \nabla w|_{f(x')} \right |} \right \rceil
\end{equation}
for a constant $C$ that approximately determines the overall effort. This heuristic simply distributes effort to each point according to its ``need'' to be perturbed as measured by its weighting gradient.

If points occupy the same positions for a long time (say, because the geometry of $f$ inhibits their movement), another sensible heuristic attuned to this behavior is an effort defined in terms of 
%(the ceiling of an affine rescaling of) 
the effective energies of points \emph{\`a la} \cite{huntsman2021sampling}, using the cumulative occupation times of positions to define probabilities and some notion of (e.g.) convergence rate (which in general will vary over time) to define the overall timescale required for the framework. This heuristic will focus effort on positions that have been occupied the least amount of time while encouraging all positions to be uniformly occupied. 

Shifting gears, consider the situation where we have no objective $f$ and $X$ is completely generic, so that even the weighting gradient is not necessarily defined. Suppose we have $\{x_j\}_{j = 1}^n \subset X$ and distance matrix $d_{jk} = d(x_j,x_k)$. Let $w$ be the weighting at $t_+$ and select an index $j_*$ with small weighting component $w_{j_*}$. Now sample $x'$ from a ``proposal'' distribution $\mathbb{P}(x'|x_{j_*})$ and compare the magnitudes $\mu := \text{Mag}(t_+; \{x_j\}_{j = 1}^n)$ and $\mu' := \text{Mag}(t_+; \{x_j\}_{j \ne j_*} \cup \{x'\})$, where we use the same (original) value of $t_+$ in both instances. If $\mu' \ge \mu$ then replace $x_{j_*}$ with $x'$; also, if $\mu' < \mu$ and $x' \not \in \{x_j\}_{j \ne j_*}$, do the same thing with probability $\exp(-\beta [\mu-\mu'])$ for some suitable $\beta < 0$. That is, perform a Metropolis-Hastings step according to the proposal.
\footnote{
More generally, we can make multiple proposals for $x'$ and use one of the acceptance mechanisms described in \cite{huntsman2020fast}.
}
This sort of technique works in great generality, and its efficiency is determined largely by that the the proposal.

\begin{figure}[h]
  \centering
  \includegraphics[trim = 35mm 100mm 35mm 100mm, clip, width=\columnwidth,keepaspectratio]{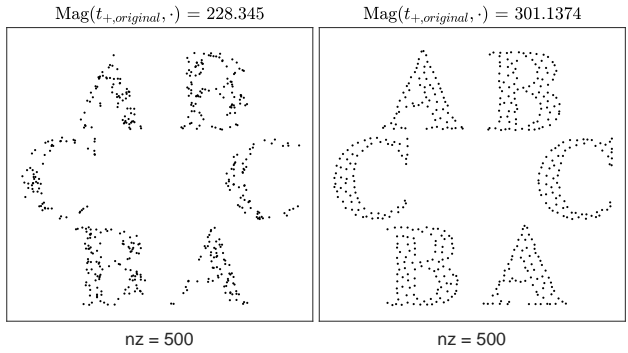}
  \caption{(Left) 500 points sampled as in Figure \ref{fig:boundaryABC20210402}. (Right) The result of the following process: for each of 1000 timesteps, we select the 10 points with least weighting component (at the original value of $t_+$) and sample 10 corresponding candidate points outside of the current locations and without replacement; then (and in ascending order of weighting component) test to see if the overall magnitude increases upon substituting the candidate point (i.e., perform a Metropolis-Hastings step for $\beta = \infty$). The magnitude increases by almost 32 percent.}
  \label{fig:MaxDiversityABC20210405}
\end{figure}

\section{\label{sec:Remarks}Remarks}

Although our experiments have focused on the results of applying the weighting gradient flow and related constructions after a MOEA has been applied, the more natural application is in the course of a MOEA. As mentioned in \S \ref{sec:Extensions} and \ref{sec:DiscreteObjective}, there is ample scope to refine and build on ideas for increasing weighting components in specific  contexts. It is nevertheless clear that the theory of magnitude informs principled and practical diversity-promoting mechanisms that can already be usefully applied to benchmark multi-objective problems.

% if have a single appendix:
%\appendix[Proof of the Zonklar Equations]
% or
%\appendix  % for no appendix heading
% do not use \section anymore after \appendix, only \section*
% is possibly needed

% use appendices with more than one appendix
% then use \section to start each appendix
% you must declare a \section before using any
% \subsection or using \label (\appendices by itself
% starts a section numbered zero.)
%

\appendices
\section{\label{sec:erosion}Erosion}

In the setting of Euclidean space we can iteratively downsample a finite set $B$ to obtain $B' \subseteq B$ with positive weights. The idea is as follows. Let $Z_{jk} := \exp(-d(b_j,b_k))$ and let $w$ be the unique weighting for $Z$. 
\footnote{
In an arbitrary metric space, we may not have a weighting at all, let alone a unique one. However, the absence or nonuniqueness of any weighting is a degenerate pathology that can be avoided by perturbations. 
}
Now set $B' = B$, $w' = w$, and repeatedly reassign $B' \leftarrow \{b_j \in B': w'_j > 0\}$ and recompute $Z'$ and $w'$ until the corresponding weighting is positive. This iteration terminates in a unique $B' \ne \varnothing$ admitting a positive weighting. When normalized, this weighting maximizes diversity on $B'$, as depicted in Figure \ref{fig:erosionABC20210402}. % See code above 

\begin{figure}[h]
  \centering
  \includegraphics[trim = 40mm 110mm 40mm 110mm, clip, width=\columnwidth,keepaspectratio]{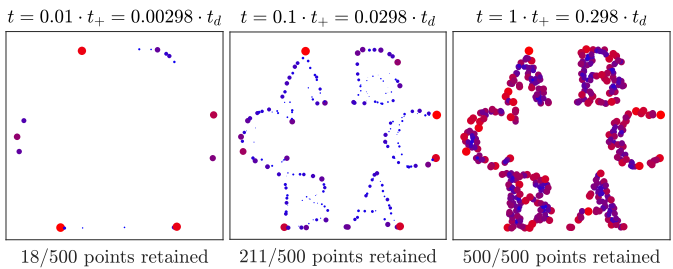}
  \caption{From left to right: diversity-saturating weightings on erosions of the set from Figure \ref{fig:boundaryABC20210402} for varying scale factors $t$. The number of points retained in each erosion is shown. Both the color and size of a point is a function of the weighting component.}
  \label{fig:erosionABC20210402}
\end{figure}

In short, we can enforce the existence of a positive weighting by scaling or eroding, depending on whether the scale or elements of the data should be prioritized. Provided we ignore (points with) lesser weights, the two approaches appear to yield qualitatively similar results.

%% you can choose not to have a title for an appendix
%% if you want by leaving the argument blank
%\section{}
%Appendix two text goes here.

% use section* for acknowledgment
\section*{Acknowledgment}

Thanks to Andy Copeland, Megan Fuller, Zac Hoffman, Rachelle Horwitz-Martin, and Daryl St. Laurent for many patient questions and observations that clarified and simplified the ideas herein. This research was developed with funding from the Defense Advanced Research Projects Agency (DARPA). The views, opinions and/or findings expressed are those of the author and should not be interpreted as representing the official views or policies of the Department of Defense or the U.S. Government. DISTRIBUTION STATEMENT A. Approved for public release; distribution is unlimited.

% Can use something like this to put references on a page
% by themselves when using endfloat and the captionsoff option.
\ifCLASSOPTIONcaptionsoff
  \newpage
\fi

% trigger a \newpage just before the given reference
% number - used to balance the columns on the last page
% adjust value as needed - may need to be readjusted if
% the document is modified later
%\IEEEtriggeratref{8}
% The "triggered" command can be changed if desired:
%\IEEEtriggercmd{\enlargethispage{-5in}}

% references section

% can use a bibliography generated by BibTeX as a .bbl file
% BibTeX documentation can be easily obtained at:
% http://mirror.ctan.org/biblio/bibtex/contrib/doc/
% The IEEEtran BibTeX style support page is at:
% http://www.michaelshell.org/tex/ieeetran/bibtex/
\bibliographystyle{IEEEtran}
% argument is your BibTeX string definitions and bibliography database(s)
%\bibliography{IEEEabrv,diversity}
% Generated by IEEEtran.bst, version: 1.12 (2007/01/11)

%
% <OR> manually copy in the resultant .bbl file
% set second argument of \begin to the number of references
% (used to reserve space for the reference number labels box)
%\begin{thebibliography}{1}
%
%\bibitem{IEEEhowto:kopka}
%H.~Kopka and P.~W. Daly, \emph{A Guide to \LaTeX}, 3rd~ed.\hskip 1em plus
%  0.5em minus 0.4em\relax Harlow, England: Addison-Wesley, 1999.
%
%\end{thebibliography}

% biography section
% 
% If you have an EPS/PDF photo (graphicx package needed) extra braces are
% needed around the contents of the optional argument to biography to prevent
% the LaTeX parser from getting confused when it sees the complicated
% \includegraphics command within an optional argument. (You could create
% your own custom macro containing the \includegraphics command to make things
% simpler here.)
%\begin{IEEEbiography}[{\includegraphics[width=1in,height=1.25in,clip,keepaspectratio]{mshell}}]{Michael Shell}
% or if you just want to reserve a space for a photo:

%\begin{IEEEbiography}{Michael Shell}
%Biography text here.
%\end{IEEEbiography}

% if you will not have a photo at all:
\begin{IEEEbiographynophoto}{Steve Huntsman} 
is a mathematician in the cyber division at STR. 
\end{IEEEbiographynophoto}

% insert where needed to balance the two columns on the last page with
% biographies
%\newpage

%\begin{IEEEbiographynophoto}{Jane Doe}
%Biography text here.
%\end{IEEEbiographynophoto}

% You can push biographies down or up by placing
% a \vfill before or after them. The appropriate
% use of \vfill depends on what kind of text is
% on the last page and whether or not the columns
% are being equalized.

%\vfill

% Can be used to pull up biographies so that the bottom of the last one
% is flush with the other column.
%\enlargethispage{-5in}

% that's all folks
\end{document}